%% file: Main_ALL.tex
\documentclass[journal]{IEEEtran}

\usepackage{amsthm,amsmath,amsfonts,bbm}
\usepackage{array}
\usepackage{textcomp}
\usepackage{stfloats}
\usepackage{url}
\usepackage{verbatim}

\usepackage{cite}

\usepackage{amssymb}
\usepackage{bm}
\usepackage{booktabs}
\usepackage{lipsum}
\usepackage{enumerate}

\usepackage{graphicx}
\usepackage{subfigure}

\usepackage[ruled,linesnumbered]{algorithm2e}
\usepackage{pifont}

\usepackage[colorlinks,
linkcolor=blue,
anchorcolor=blue,
citecolor=blue]{hyperref}
\usepackage{color}

\allowdisplaybreaks[4]

\newtheorem{lemma}{Lemma}

\newtheorem{theorem}{Theorem}

\newtheorem{assumption}{Assumption}
\newtheorem{corollary}{Corollary}

\begin{document}
	
\title{Online Conformal Prediction with Corrupted Feedback}

\author{Bowen Wang,~\IEEEmembership{Graduate Student Member,~IEEE},
Matteo Zecchin,~\IEEEmembership{Member,~IEEE},\\
and Osvaldo Simeone,~\IEEEmembership{Fellow,~IEEE} 
\thanks{The work of O. Simeone was supported by the Open Fellowships of the Engineering and Physical Sciences Research Council (EPSRC) under Grant EP/W024101/1, in part by the EPSRC project under Grant EP/X011852/1, and in part by the European Research Council (ERC) under Grant No. 101198347.}
\thanks{Bowen Wang is with the Department of Engineering, King's College London, WC2R 2LS London, U.K. (e-mail: bowen.wang@kcl.ac.uk).}
\thanks{Matteo Zecchin is with the Communication Systems Department, EURECOM, 06410 Sophia Antipolis, France (e-mail: zecchin@eurecom.fr).}
\thanks{Osvaldo Simeone is with the Institute for Intelligent Networked Systems, Northeastern University London, E1W 1LP London, U.K. (e-mail: o.simeone@northeastern.edu).}
}
	
\maketitle

\begin{abstract}
	Modern artificial intelligence systems require calibrated uncertainty estimates that remain reliable in sequential and non-stationary environments. 
	Online conformal prediction (OCP) addresses this challenge through adaptively updated prediction sets that provide deterministic long-run miscoverage guarantees. 
	These guarantees, however, hinge on the assumption of perfect feedback about the coverage of past prediction sets.
	In practice, the observed miscoverage indicator may be corrupted by noise, communication failures, or adversarial manipulation, which can severely degrade OCP's calibration guarantees. 
	In this paper, we study OCP under corrupted feedback. 
	We first model feedback corruption as an arbitrary binary flip sequence, and analyze how feedback corruption affects and degrades the miscoverage performance of standard OCP. 
	We then propose two robust schemes: robust OCP via filtering, which leverages the structural properties of the predicted threshold to filter corrupted feedback, and robust OCP via active compensation, which incorporates an active compensation mechanism to mitigate the effect of corrupted feedback.
	For both methods, we establish explicit miscoverage guarantees, which are further specialized for an independent stochastic flip model and for an arbitrary error model with memory bounds. 
	Experiments on real-world datasets validate the proposed approach, showing markedly improved calibration and significantly smaller prediction sets compared with baseline OCP methods under corrupted feedback.
\end{abstract}
\begin{IEEEkeywords}
	Artificial intelligence, corrupted feedback, online conformal prediction, online convex optimization.
\end{IEEEkeywords}

\section{Introduction}

\subsection{Context and Motivation}

The reliable deployment of modern artificial intelligence (AI) systems  increasingly depends not only on predictive accuracy but also on the ability to rigorously quantify uncertainty \cite{gawlikowski2023survey}. 
In safety-critical and decision-oriented applications \cite{abdar2021review}, a model must be able to assess when its predictions can be trusted and when uncertainty should be explicitly communicated to downstream decision-makers or controllers \cite{Bingcong2019TSP,Chlaily2023TSP,Nir2025TSP,Dua2026ICASSP}. 
This requirement arises across a broad range of domains \cite{simeone2025uncertainty}. 
In large language model (LLM)-based systems \cite{shorinwa2025survey}, calibrated uncertainty estimates can inform selective abstention, tool invocation, and human-in-the-loop interaction. 
In signal processing and engineering applications, such as robotics \cite{ankile2024juicer}, industrial monitoring \cite{Ji2024TSP}, wireless networks \cite{zecchin2026prediction} and distributed systems \cite{zhu2025conformal}, calibrated confidence measures are essential for robust decision-making under distribution shift, partial observability, and non-stationary conditions \cite{simeone2025conformal}.

The need for uncertainty quantification of AI models becomes particularly acute in sequential settings, where data arrive over time and operating conditions may evolve continually \cite{simeone2025conformal}. 
In such environments, classical offline calibration methods \cite{angelopoulos2020uncertainty} are often inadequate, as they rely on exchangeability or stationarity assumptions and offer only \textit{statistical} coverage guarantees.
By contrast, many real-world systems demand online calibration mechanisms that can adapt to changing conditions while providing \textit{deterministic}, worst-case guarantees over finite horizons \cite{chernozhukov2018exact}.

\textit{Online conformal prediction} (OCP) \cite{gibbs2021adaptive,gibbs2024conformal,bhatnagar2023improved,zecchin2024localized,wu2025error,xu2023conformal} has recently emerged as a principled framework for addressing this challenge. 
By updating the prediction threshold sequentially based on observed coverage errors, OCP can control the long-run miscoverage rate in an online and distribution-agnostic manner \cite{gibbs2021adaptive,gibbs2024conformal,bhatnagar2023improved,zecchin2024localized,wu2025error,wang2025mirror}. 
Its appeal lies in the fact that the calibration mechanism is adaptive, computationally lightweight, and amenable to deterministic finite-horizon analysis. 
However, this guarantee relies critically on an assumption that is often left implicit: the feedback about the coverage of past prediction sets must be correct \cite{gibbs2021adaptive,gibbs2024conformal,bhatnagar2023improved,zecchin2024localized,wu2025error}.

This assumption can be fragile in practice. 
In deployed systems, coverage feedback may be corrupted by communication failures \cite{Fishel2025TSP}, annotation noise, delayed or inconsistent supervision, sensor malfunctions, or adversarial manipulation \cite{zecchin2026prediction,Fishel2025TSP,simeone2025conformal}. 
In LLM-assisted pipelines, for example, whether an output should be deemed ``covered'' may itself depend on noisy automatic evaluation or downstream agent responses. 
In cyber-physical and communication-based systems, feedback may be distorted by transmission errors \cite{zecchin2026prediction}, by conflicts among multiple decision-making modules, or by malicious manipulations that may drive a  control process away from its intended operating point \cite{alimohammadi2024kpi}.
Corrupted feedback is problematic for OCP, since when the coverage feedback contradicts the true coverage indicator, the OCP update may cause the threshold to drift systematically away from the value required for correct calibration. 
As a result, the deterministic guarantees of OCP may break down even when the corruption level is moderate.

Motivated by this gap, in this paper we study OCP under corrupted feedback. 
Our goal is both to understand how feedback corruption affects the calibration dynamics of OCP and to develop robust online conformal algorithms that retain strong miscoverage guarantees.

\subsection{Related Work}

\subsubsection{Conformal Prediction with Label Noise}

The effect of corrupted labels on \textit{conformal prediction} (CP) has been studied primarily in offline settings, where noisy labels are used to construct the threshold \cite{einbinder2024label,feldman2025conformal,xi2025exploring,Chi2026ICASSP}. 
In \cite{einbinder2024label}, prediction sets are built directly from corrupted calibration datasets under independent and identically distributed (i.i.d.) assumptions, and the resulting miscoverage analysis shows that label corruption induces conservative behavior, leading to over-coverage. 
This framework is extended in \cite{feldman2025conformal} beyond the i.i.d. setting through weighted CP, which is used to account for distribution shift.
However, these methods are not designed for sequential settings and do not provide deterministic pathwise guarantees. 
More closely related to the present work, reference \cite{xi2025exploring} proposes a robust OCP method under uniform label noise by constructing an unbiased estimator of the threshold-update gradient. 
Nevertheless, the method requires prior knowledge of the noise level and does not accommodate more general feedback corruption models.

\begin{figure*}[t]
	\centering
	\subfigure[]{\label{Fig_0a}
		\includegraphics[width=1\textwidth]{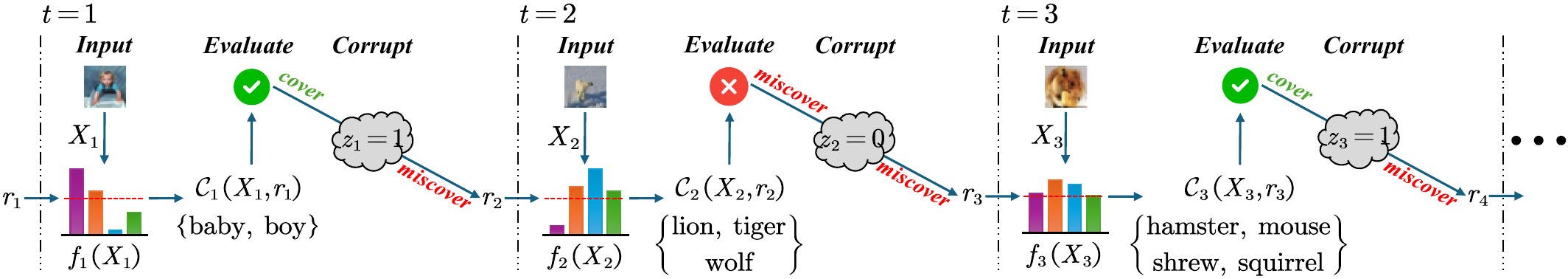}
	}
	\subfigure[]{\label{Fig_0b}
		\includegraphics[width=0.4\textwidth]{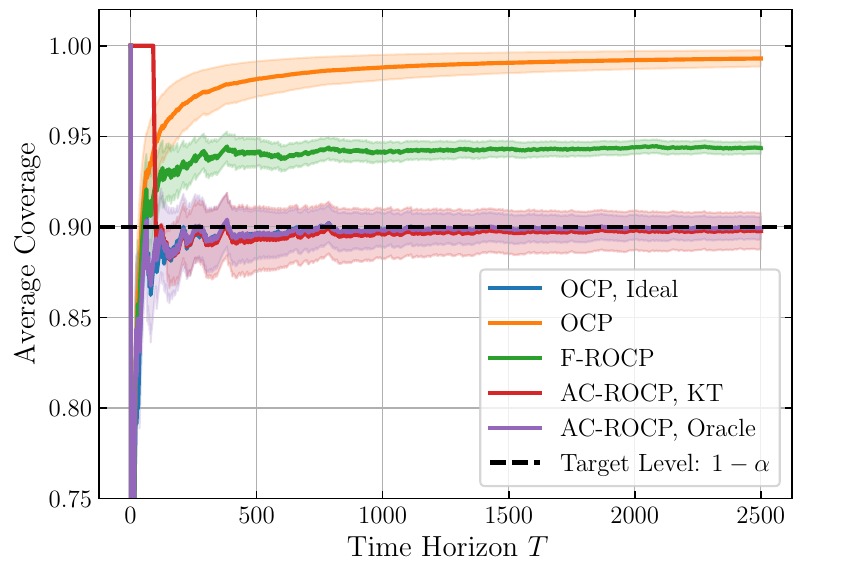}
		\includegraphics[width=0.4\textwidth]{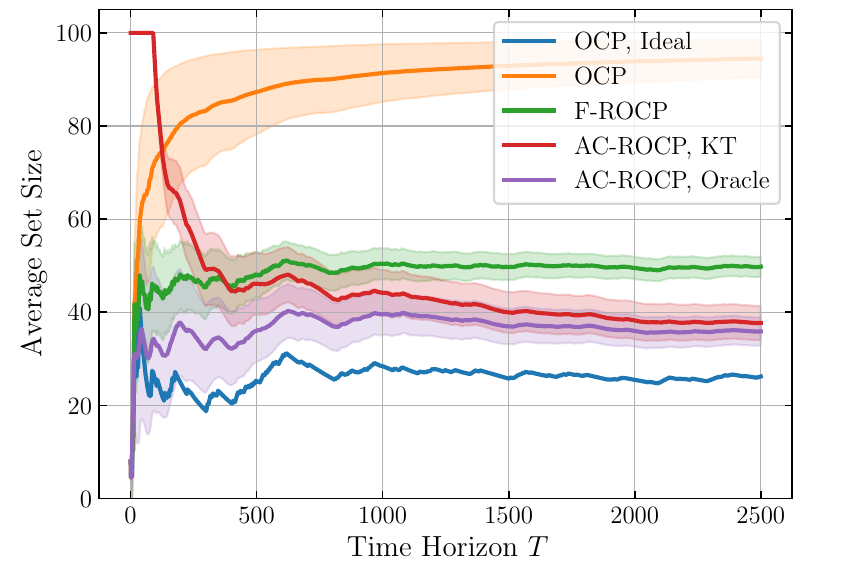}
	}
	\caption{(a) Operation of online conformal prediction (OCP) \cite{gibbs2021adaptive,gibbs2024conformal,bhatnagar2023improved,zecchin2024localized,wu2025error,xu2023conformal} under corrupted feedback. At each time step $t$, given an input $X_t$, OCP maps the output of the prediction model $f(X_t)$ to the prediction set $\mathcal{C}_t(X_t; r_t)$, whose size is governed by the threshold $r_t$. 
	Evaluation against the true label $Y_t$ yields the miscoverage indicator $e_t \in \{ 0 , 1 \}$, but the learner only observes the corrupted feedback $\bar{e}_t \in \{ 0 , 1 \}$ with feedback error indicator $z_t = \mathbbm{1} \{ e_t \ne \bar{e}_t\} $.
	The threshold is then updated based on the feedback $\bar{e}_t$ to $r_{t+1}$.
	(b) Averaged coverage (left) and set size (right) of the considered methods over the time horizon $T$ with target miscoverage rate $\alpha = 0.1$ for the CIFAR-100 based classification task.
	Results are averaged over $10^4$ independent trials, with the shaded region denoting a two-sided interval with half-width equal to one standard deviation.
	Conventional OCP fails to meet the coverage requirement, significantly exceeding the target coverage level $1-\alpha = 0.9$. In contrast, the proposed methods, namely robust OCP via filtering (F-ROCP) and robust OCP via active compensation (AC-ROCP), achieve the target coverage accurately, while producing significantly smaller and hence more informative prediction sets than the conventional OCP.}
	\label{Fig_0}
\end{figure*}

\subsubsection{Online Convex Optimization with Corrupted Feedback}

Since OCP is typically implemented via \textit{online subgradient descent} (OGD) \cite{zinkevich2003online}, the problem of OCP with corrupted feedback can also be viewed as a special instance of \textit{online convex optimization} (OCO) \cite{orabona2019modern} with corrupted feedback.
Existing work in this area can be broadly divided into two main lines of research: one focuses on stochastic corruption models, while the other studies adversarial corruption in unconstrained OCO.
In the first line of work, the corrupted gradients are assumed to be generated by a stochastic process \cite{flaxman2004online,zhang2022parameter,van2019user,Cao2019TAC}, and the resulting regret bounds scale with the noise level. 
In the second line of research, an adversary is allowed to corrupt the gradients arbitrarily, and the corresponding regret guarantees depend on the cumulative corruption level \cite{van2021robust,zhang2025unconstrained}.
While these results provide useful insights into optimization under corrupted gradient information, their objective is to ensure sublinear regret. 
This is fundamentally different from the goal of OCP, where the key requirement is calibration rather than optimization performance. 
In particular, regret guarantees do not directly translate into miscoverage guarantees \cite{angelopoulos2025gradient}, and hence are not sufficient to characterize the calibration behavior of OCP under corrupted feedback.

\subsection{Contributions}
Motivated by the need to develop robust OCP mechanisms under corrupted feedback, the main contributions of this paper are summarized as follows:

\begin{itemize}
	\item As shown in Fig. \ref{Fig_0a}, we introduce a general feedback corruption model in which the true miscoverage indicator is flipped according to an arbitrary binary corruption process. 
	Under this model, we analyze how corrupted feedback affects both the threshold dynamics and the miscoverage performance of standard OCP (see Fig. \ref{Fig_0b}).

	\item Building on this analysis, we propose two robust OCP schemes: \textit{robust OCP via filtering} (F-ROCP), which requires no side information and exploits only the intrinsic structural properties of OCP; and \textit{robust OCP via active compensation} (AC-ROCP), which incorporates an active compensation mechanism to mitigate the effect of corrupted feedback. 
	For both methods, we establish explicit miscoverage guarantees.

	\item We specialize the proposed framework to two representative corruption models: an independent stochastic flip model and an arbitrary error model with memory bounds. 
	For both settings, we instantiate F-ROCP and AC-ROCP and derive explicit miscoverage bounds that characterize the dependence on the corruption level and, for AC-ROCP, on the quality of the compensation mechanism.

	\item We validate the proposed methods through extensive experiments on real-world datasets under both corruption models. 
	The results show that the proposed schemes substantially improve the calibration performance of OCP under corrupted feedback, while also achieving significantly smaller and hence more informative prediction sets than the baseline methods (see Fig. \ref{Fig_0b}).
\end{itemize}

The rest of this paper is organized as follows. 
In Sec. \ref{Sec_2}, we introduce the online conformal prediction setting and the feedback corruption models considered in this work. 
Sec. \ref{Sec_3} analyzes the impact of corrupted feedback on standard OCP and shows empirical results. 
Sec. \ref{Sec_4} presents the F-ROCP method, while Sec. \ref{Sec_5} introduces the AC-ROCP. 
Sec. \ref{Sec_6} and \ref{Sec_7} study two representative corruption models, and compare different approaches.
Sec. \ref{Sec_8} reports experimental results, and Sec. \ref{Sec_9} concludes the paper.

\section{Problem Definition}\label{Sec_2}
In this section, we describe relevant background material, and introduce the corrupted feedback model.

\subsection{Online Conformal Prediction}

Following the standard OCP framework \cite{gibbs2021adaptive,gibbs2024conformal,bhatnagar2023improved,zecchin2024localized,wu2025error,xu2023conformal}, as illustrated in Fig. \ref{Fig_0a}, a sequence of data points $\{(X_t, Y_t)\}_{t=1}^{T}$ arrives sequentially, where $X_t \in \mathcal{X}$ denotes the input feature vector, $Y_t \in \mathcal{Y}$ is the corresponding true label, and $T$ denotes the time horizon. 
No distributional assumptions are imposed on the sequence: the sequence $\{(X_t, Y_t)\}_{t=1}^{T}$ may be chosen adversarially or exhibit arbitrary temporal dependence.

At each time step $t$, a pre-trained base model $f \colon \mathcal{X} \to \mathcal{Y}$ produces a point prediction for input $X_t$. 
Rather than relying on this point prediction alone, OCP quantifies the uncertainty in the prediction by leveraging the model's output $f(X_t)$ to produce a \textit{prediction set} $\mathcal{C}_t(X_t) \subseteq \mathcal{Y}$.
Formally, for a user-specified miscoverage rate $\alpha \in [0,1]$, the goal of OCP is to ensure that the long-run empirical miscoverage converges to a target value $\alpha \in [0 , 1]$, i.e.,
\begin{equation}\label{ex_long_cov}
	\lim_{T \to \infty} \frac{1}{T} \sum_{t=1}^{T} \mathbbm{1} \{ Y_t \notin \mathcal{C}_t(X_t) \} = \alpha .
\end{equation}

To this end, OCP adopts a non-conformity score function $S(x, y): \mathcal{X} \times \mathcal{Y} \to \mathbb{R}$ to measure the discrepancy of the pair $(x, y)$ relative to the model prediction $f(x)$.
For example, in classification tasks, a common choice is $S(x, y) = 1 - [f(x)]_y$ \cite{angelopoulos2024online}, where $[f(x)]_y$ is the predicted probability of class $y$ for input $x$, while in regression tasks a common choice is the absolute prediction error $S(x, y) = |y - f(x)|$ \cite{gibbs2021adaptive}.
The prediction set $\mathcal{C}_t(X_t; r_t)$ is defined as the set of labels whose non-conformity scores do not exceed a \textit{prediction threshold} $r_t$, i.e.,
\begin{equation}\label{Eq_Pre_Set}
	\mathcal{C}_t(X_t; r_t) = \left\{ y \in \mathcal{Y} : S(X_t, y) \le r_t \right\} .
\end{equation}

The environment then reveals whether the true label $Y_t$ is covered by the prediction set. The resulting \textit{miscoverage indicator} is
\begin{equation}\label{eq_err}
	e_t = \mathbbm{1}\{ Y_t \notin \mathcal{C}_t(X_t; r_t) \} = \mathbbm{1}\{ r_t < s_t \} ,
\end{equation}
where $s_t = S(X_t, Y_t)$ is the true non-conformity score of the true label at time $t$.
This feedback is used to update the threshold $r_t$ for the next round $t+1$.

Specifically, OCP can be viewed as an application of \textit{online gradient descent} (OGD) to the problem of minimizing the cumulative quantile loss at level $1 - \alpha$. 
In particular, define the $(1 - \alpha)$-th quantile loss as
\begin{equation}
	\ell_{1 - \alpha} \left( r , s_t \right) = ( \alpha - \mathbbm{1}\{ r < s_t \} ) ( r - s_t ).
\end{equation}
After observing the miscoverage indicator $e_t$ in \eqref{eq_err}, OCP performs OGD to update the threshold $r_t$ as \cite{gibbs2021adaptive,gibbs2024conformal,bhatnagar2023improved,zecchin2024localized,wu2025error,xu2023conformal}
\begin{equation}\label{eq_clean_OCP}
	\begin{aligned}
		r_{t+1} & = r_t - \eta g_t \\
		& = r_t - \eta \left( \alpha - e_t \right) ,
	\end{aligned}
\end{equation}
where $\eta > 0$ is the \textit{learning rate}, and 
\begin{equation}\label{eq_clean_grad}
	g_t = \nabla_r \ell_{1 - \alpha} \left( r_t , s_t \right) = \alpha - e_t .
\end{equation}
is the \textit{subgradient} of the quantile loss evaluated at the threshold $r_t$.

The update \eqref{eq_clean_OCP} admits a simple and intuitive interpretation. When a miscoverage occurs, i.e., $e_t = 1$, the gradient $g_t$ is negative, and the threshold $r_t$ increases by $\eta(1 - \alpha)$, thereby enlarging future prediction sets.
Conversely, when the true label is covered, i.e., $e_t = 0$, the threshold decreases by $\eta \alpha$, shrinking future prediction sets and improving efficiency.

Before presenting the formal miscoverage guarantees, we state an assumption used throughout the subsequent analysis.
\begin{assumption}[\!\!\cite{gibbs2021adaptive,hu2026distributioninformed,angelopoulos2024online}]\label{Assumption_1}
	The score function $S(x, y)$ is bounded within the interval $[0, B]$ for some constant $0 < B < \infty$, i.e., we have the inclusion $s_t \in [0, B]$ for all $t = 1 , 2 , \ldots, T$.
\end{assumption}

The empirical miscoverage performance of OCP is summarized by the following lemma provided in \cite{gibbs2021adaptive}.
\begin{lemma}[\!\!\cite{gibbs2021adaptive}]\label{Lem_OCP_Clean}
	Under Assumption~\ref{Assumption_1}, the empirical miscoverage rate of OCP with ideal feedback satisfies the inequality 
	\begin{equation}\label{eq_mis_ideal_ocp}
		\mathrm{MisCov}(T) = \left| \alpha - \frac{1}{T} \sum_{t=1}^{T} \mathbbm{1}\{ Y_t \notin \mathcal{C}_t(X_t; r_t) \}  \right| \le \frac{B + \eta}{\eta T} .
	\end{equation}
\end{lemma}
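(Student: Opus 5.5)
The plan is to telescope the update \eqref{eq_clean_OCP} and then bound the final threshold. Summing $r_{t+1} = r_t - \eta(\alpha - e_t)$ over $t=1,\ldots,T$ gives
\begin{equation*}
r_{T+1} - r_1 = \eta \sum_{t=1}^{T} (e_t - \alpha),
\end{equation*}
so that
\begin{equation*}
\Bigl| \alpha - \frac{1}{T}\sum_{t=1}^{T} e_t \Bigr| = \frac{|r_{T+1} - r_1|}{\eta T}.
\end{equation*}
It therefore suffices to show that $|r_{T+1} - r_1| \le B + \eta$. A natural choice is to initialize $r_1 \in [0,B]$, and the key claim is that every iterate then stays in the interval $[-\eta\alpha,\, B + \eta(1-\alpha)]$.

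The invariant is established by induction, splitting into cases on the position of $r_t$. This step is where Assumption~\ref{Assumption_1} enters, and it is the only step requiring care.

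\emph{Case $r_t > B$.} Since $s_t \le B < r_t$, we have $e_t = 0$, so the threshold decreases by $\eta\alpha$. It therefore stays above $B - \eta\alpha \ge -\eta\alpha$ and below its previous value.

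\emph{Case $r_t < 0$.} Since $s_t \ge 0 > r_t$, we have $e_t = 1$, so the threshold increases by $\eta(1-\alpha)$. It ends below $\eta(1-\alpha) \le B + \eta(1-\alpha)$ and above its previous value.

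\emph{Case $r_t \in [0,B]$.} A single step moves the threshold by at most $\eta\max(\alpha, 1-\alpha)$. Hence $r_{t+1} \in [-\eta\alpha,\, B + \eta(1-\alpha)]$.

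Combining the three cases, the interval is invariant.

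Since the interval has length $B + \eta$ and both $r_1$ and $r_{T+1}$ lie in it, we get $|r_{T+1} - r_1| \le B + \eta$. Dividing by $\eta T$ yields the bound \eqref{eq_mis_ideal_ocp}.

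The main obstacle is getting the constant exactly $B+\eta$ rather than something like $B + 2\eta$. This is handled by tracking the asymmetric overshoots $\eta\alpha$ below $0$ and $\eta(1-\alpha)$ above $B$ separately, instead of bounding both by $\eta$. If $r_1$ is not in $[0,B]$, one would instead need to assume it, or add $\operatorname{dist}(r_1,[0,B])$ to the bound; I would state the initialization $r_1\in[0,B]$ explicitly, as in \cite{gibbs2021adaptive}.
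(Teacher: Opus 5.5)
Your proof is correct and is essentially the paper's argument. The paper cites this lemma rather than proving it, but its proof of Theorem~\ref{Theo_1} in Appendix~\ref{App_Theo_1} reduces, with zero corruption and $r_1=0$, to exactly your telescoping step plus the invariant $[-\eta\alpha,\,B+\eta(1-\alpha)]$ of Lemma~\ref{Lem_3_1}. One small wording fix: in the case $r_t\in[0,B]$, the asymmetric interval follows because a downward step has size exactly $\eta\alpha$ and an upward step exactly $\eta(1-\alpha)$, not from the symmetric bound $\eta\max(\alpha,1-\alpha)$ you cite.
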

From Lemma~\ref{Lem_OCP_Clean}, it is evident that the miscoverage rate $\mathrm{MisCov}(T)$ vanishes as $T$ grows.
Consequently, the long-run average miscoverage converges to the target level $\alpha$.

\subsection{Corrupted Feedback}\label{Sec_II_B}
The performance guarantees of OCP, summarized in Lemma~\ref{Lem_OCP_Clean}, critically rely on access to the exact feedback signal $e_t$, which is required to construct the gradient $g_t$ in \eqref{eq_clean_grad} for the OGD based update \eqref{eq_clean_OCP}. 
In practice, however, this feedback may be corrupted by noise, communication errors, or adversarial manipulation, potentially leading to severe degradation in coverage performance.
Accordingly, in this paper, we assume that the true miscoverage indicator $e_t \in \{ 0 , 1  \}$ is not directly observed. Instead, the learner receives a corrupted version $\bar{e}_t \in \{ 0 , 1  \}$, which may differ from $e_t$.
We model the \textit{feedback error indicator} as
\begin{equation}\label{eq_corrupted_err}
	z_t = \mathbbm{1} \{ \bar{e}_t \ne e_t \}  .
\end{equation}
Unless stated otherwise, we make no assumptions on the feedback error sequence $\{z_t\}_{t=1}^T$.
Under corrupted feedback \eqref{eq_corrupted_err}, the learner reconstructs the noisy gradient
\begin{equation}\label{eq_corrupted_grad}
	\overline{g}_t = \alpha - \bar{e}_t ,
\end{equation}
where the corrupted feedback $\bar{e}_t$ replaces the true error $e_t$ in \eqref{eq_clean_grad}.

We denote the set of time indices in which a feedback error occurs as
\begin{equation}\label{Eq_11}
	\begin{aligned}
		\bar{\mathcal{G}}^T & = \left\{  t \in \{1, 2, \cdots, T\} : z_t = 1 \right\} \\
		& = \left\{  t \in \{1, 2, \cdots, T\} : g_t \neq \bar{g}_t \right\} ,
	\end{aligned}
\end{equation}
and a natural measure of feedback quality is the \textit{cumulative corruption level}
\begin{equation}
	G^T = |\bar{\mathcal{G}}^T| = \sum_{t=1}^{T} z_t = \sum_{t=1}^T \mathbbm{1}\{ g_t \neq \bar{g}_t \}  .
\end{equation}

\section{Impact of Corrupted Feedback on Online Conformal Prediction}\label{Sec_3}

In this section, we evaluate the impact of corrupted feedback, as in \eqref{eq_corrupted_err}-\eqref{eq_corrupted_grad}, on the performance of conventional OCP. 
We start with a numerical example and then provide a formal analysis of the miscoverage rate of OCP under corrupted feedback.

\subsection{Numerical Example}

Referring to Sec. \ref{Sec_8} for full details on the setting, we consider here an image classification task based on CIFAR-100 dataset, where the objective is to construct prediction sets that contain the true class label with target miscoverage level $\alpha = 0.1$.
Under corrupted feedback, the OCP update \eqref{eq_clean_OCP} becomes
\begin{equation}\label{OCP_Noisy}
	r_{t+1} = r_t - \eta \bar{g}_t ,
\end{equation}
that is, the corrupted gradient in \eqref{eq_corrupted_grad} is directly used to update the threshold.
Fig. \ref{Fig_1} reports the average coverage rate and average set size under the assumption that the feedback error indicators $z_t$ are i.i.d. Bernoulli with parameter $p = \Pr \{ z_t = 1\}$.

In Fig. \ref{Fig_1}, as the corruption probability $p$ increases, the empirical coverage $\sum_{t=1}^T \mathbbm{1} \{ Y_t \in \mathcal{C}_t(X_t) \} / T$ increasingly deviates from the target level $1 - \alpha = 0.9$.
In fact, as seen in Fig. \ref{Fig_1_b}, the OCP update \eqref{OCP_Noisy} leads to excessively conservative predictions with increasingly larger prediction sets as the error probability $p$ grows. This numerical example illustrates the general fact that OCP fails to meet the target coverage rate under corrupted feedback.

\begin{figure}[t]
	\centering
	\subfigure[]{\label{Fig_1_a}
		\includegraphics[width=0.4\textwidth]{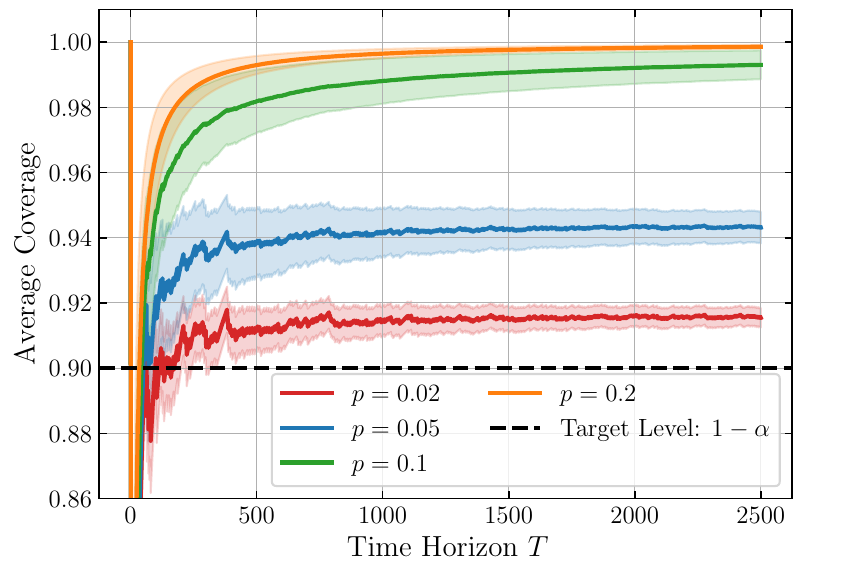}
	}
	\subfigure[]{\label{Fig_1_b}
		\includegraphics[width=0.4\textwidth]{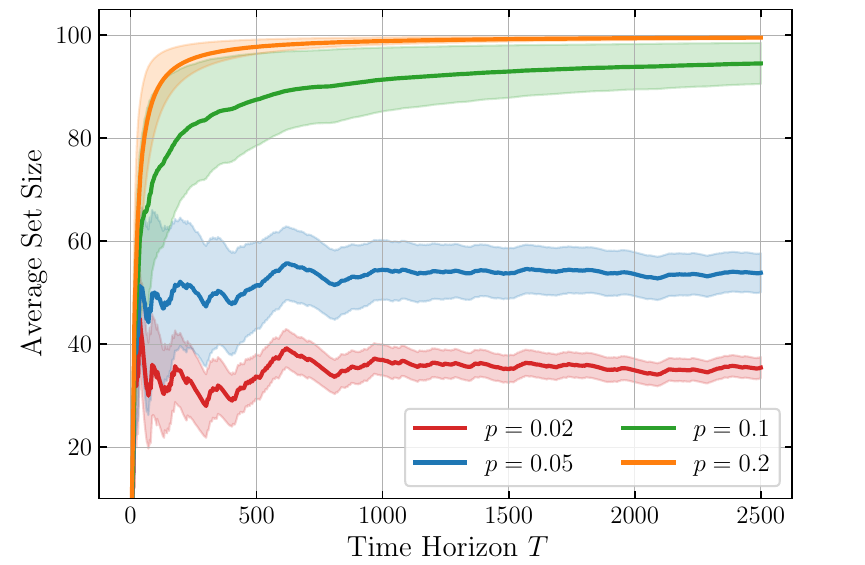}
	}
	\caption{Average coverage and average set size performance of OCP under corrupted feedback on the CIFAR-100 dataset with target miscoverage rate $\alpha = 0.1$ for the independent stochastic corruption model with different corruption probability $p$. Results are averaged over $10^4$ independent trials, with the shaded region denoting the standard deviation.}
	\label{Fig_1}
\end{figure}

\subsection{Miscoverage Analysis}

We now provide a theoretical characterization of the miscoverage behavior of OCP under corrupted feedback.

\begin{theorem}\label{Theo_1}
	Under Assumption~\ref{Assumption_1}, the empirical miscoverage rate of OCP under gradient corruption satisfies the upper bound
	\begin{equation}\label{New_2_15}
		\begin{aligned}
			\mathrm{MisCov}(T)
			\le & \frac{ B + \eta}{\eta T}
			+ \frac{ | G_{0\to1}^T - G_{1\to0}^T | }{T}  \\
			& + \frac{\max\{ \alpha G_{1\to0}^T , (1 - \alpha) G_{0\to1}^T \}}{T} ,
		\end{aligned}
	\end{equation}
	where $G_{0 \to 1}^T = | \{ t \in \bar{\mathcal{G}} : g_t = \alpha, \bar{g}_t = \alpha - 1 \} |$ and $G_{1 \to 0}^T = | \{ t \in \bar{\mathcal{G}} : g_t = \alpha - 1, \bar{g}_t = \alpha \} |$ denote the number of feedback errors that flip the gradient from $g_t = \alpha$ to $\bar{g}_t = \alpha - 1$ and from $g_t = \alpha - 1$ to $\bar{g}_t = \alpha$, respectively.
\end{theorem}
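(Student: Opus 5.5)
The plan is to reproduce the telescoping argument behind Lemma~\ref{Lem_OCP_Clean}, applied to the \emph{corrupted} recursion \eqref{OCP_Noisy}, and then to account separately for the discrepancy between true and corrupted gradients. I assume, as in the clean analysis, that the initial threshold satisfies $r_1\in[0,B]$.

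\textbf{Step 1: reduce to gradient sums.} Since $g_t=\alpha-e_t$, we have $\mathrm{MisCov}(T)=\frac{1}{T}\bigl|\sum_{t=1}^T g_t\bigr|$. Write $g_t=\bar g_t+(g_t-\bar g_t)$. On uncorrupted rounds the difference is zero. On a $0\to1$ flip we have $g_t=\alpha$ and $\bar g_t=\alpha-1$, so the difference is $+1$. On a $1\to0$ flip the difference is $-1$. Hence
\begin{equation*}
\sum_{t=1}^T g_t=\sum_{t=1}^T \bar g_t + G_{0\to1}^T-G_{1\to0}^T .
\end{equation*}
The triangle inequality then produces the middle term of \eqref{New_2_15}. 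It remains to show
\begin{equation*}
\Bigl|\sum_{t=1}^T\bar g_t\Bigr|\le \frac{B+\eta}{\eta}+\max\{\alpha G_{1\to0}^T,(1-\alpha)G_{0\to1}^T\}.
\end{equation*}
Telescoping \eqref{OCP_Noisy} gives $\sum_t\bar g_t=(r_1-r_{T+1})/\eta$, so this reduces to bounding the range of the corrupted iterates.

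\textbf{Step 2: upper envelope (the key step).} I will show by induction that
\begin{equation*}
r_t\le B+\eta(1-\alpha)+\eta(1-\alpha)N_{0\to1}(t-1),
\end{equation*}
where $N_{0\to1}(t-1)$ counts the $0\to1$ flips up to time $t-1$.

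\emph{Case $r_t\le B$.} A single step raises $r$ by at most $\eta(1-\alpha)$, so $r_{t+1}\le B+\eta(1-\alpha)$.

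\emph{Case $r_t> B$.} Assumption~\ref{Assumption_1} gives $s_t\le B<r_t$, so $e_t=0$ and $g_t=\alpha$. The iterate can increase, by $\eta(1-\alpha)$, only if $\bar g_t=\alpha-1$, i.e., only on a $0\to1$ flip. Otherwise it decreases by $\eta\alpha$, which preserves the inductive bound.

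\textbf{Step 3: lower envelope.} Symmetrically, if $r_t<0\le s_t$ then $e_t=1$ and $g_t=\alpha-1$. The iterate can then decrease only on a $1\to0$ flip, by $\eta\alpha$. This yields
\begin{equation*}
r_t\ge -\eta\alpha-\eta\alpha N_{1\to0}(t-1).
\end{equation*}

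\textbf{Step 4: combine.} Using $r_1\in[0,B]$ and the two envelopes,
\begin{align*}
r_{T+1}-r_1&\le B+\eta(1-\alpha)+\eta(1-\alpha)G_{0\to1}^T,\\
r_1-r_{T+1}&\le B+\eta\alpha+\eta\alpha G_{1\to0}^T .
\end{align*}
Both right-hand sides are at most $B+\eta+\eta\max\{\alpha G_{1\to0}^T,(1-\alpha)G_{0\to1}^T\}$. The absolute value of $r_{T+1}-r_1$ falls under one of the two one-sided bounds, which is why the statement has a $\max$ rather than a sum. Dividing by $\eta T$ and combining with Step 1 gives \eqref{New_2_15}.

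\textbf{Main obstacle.} The delicate point is the induction in Step 2 (and its mirror in Step 3). One must verify that a non-flip step taken while $r_t>B$ never increases the envelope, and that an iterate crossing from below $B$ to above $B$ is covered by the additive $\eta(1-\alpha)$ slack. Both checks are short case distinctions on the sign of $\bar g_t$.
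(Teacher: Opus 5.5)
Your proposal is correct and follows essentially the same route as the paper: the decomposition $\sum_{t} g_t=\sum_{t}\bar g_t+G_{0\to1}^T-G_{1\to0}^T$, telescoping of the corrupted recursion \eqref{OCP_Noisy}, and the two-sided iterate bound of Lemma~\ref{Lem_3_1}. The differences are cosmetic: you prove the iterate bound by induction instead of the paper's most-recent-upcrossing argument, and you allow any $r_1\in[0,B]$ rather than $r_1=0$.
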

\begin{IEEEproof}
	Please see supplementary material Appendix \ref{App_Theo_1}.
\end{IEEEproof}

Unlike the ideal feedback case in Lemma \ref{Lem_OCP_Clean}, the miscoverage bound in Theorem \ref{Theo_1} contains three terms that capture different aspects of the impact of corrupted feedback on OCP:
\begin{itemize}
	\item The first term coincides with the bound in the ideal feedback case in \eqref{eq_mis_ideal_ocp}, which vanishes as the time horizon $T$ grows.

	\item The second term quantifies the \textit{directional imbalance} of the corruption. 
	Recall that OCP raises the threshold $r_t$ after a reported miscoverage and lowers it after a reported coverage. 
	When the feedback is ideal, these two types of updates balance out at the target rate $\alpha$. 
	Under corrupted feedback, however, a miscoverage is reported even when the prediction set includes the true label for $G_{0 \to 1}^T$ times, while coverage events are reported erroneously for $G_{1 \to 0}^T$ times.
	If the two types of feedback errors occur at the same rate, their effects cancel; but any imbalance between the respective numbers of errors $G_{0 \to 1}^T$ and $G_{1 \to 0}^T$ entails that the threshold is pushed up more often than down, or vice versa, relative to what the true outcomes would warrant in OCP, translating into a miscoverage bias.

	\item The third term captures the \textit{drift} in the threshold iterate $r_t$. 
	Under ideal feedback, the true gradient pulls the threshold $r_t$ back whenever it drifts. 
	Worst-case accumulation of these wrong-direction updates inflates the absolute value of threshold $|r_{T+1}|$, producing a drift. 
\end{itemize}

\section{Robust Online Conformal Prediction via Filtering}\label{Sec_4}
The analysis in the previous section reveals that blindly trusting the observed gradient breaks the self-stabilizing feedback loop of OCP. 
This motivates a natural remedy: filter the feedback before it is used in the update of OCP. 
Based on this intuition, in this section, we propose \emph{robust OCP via filtering} (F-ROCP), a simple yet effective scheme that exploits the intrinsic structure to reject corrupted feedback and restore the iterate bound.

\subsection{Protocol}
Assumption~\ref{Assumption_1} and prediction set definition in \eqref{Eq_Pre_Set} imply that, at any time step $t$, if the threshold leaves the interval $[0 , B)$, the true gradient can be inferred exactly (step \ding{172} and blue area in Fig. \ref{Fig_FROCP}).
Specifically, when the threshold exceeds the maximum score value, i.e., $r_t \ge B$, the prediction set \eqref{Eq_Pre_Set} contains all valid labels. 
Hence, the true miscoverage indicator must be $e_t = 0$, and the true gradient is therefore $g_t = \alpha$, regardless of the observed corrupted feedback. 
Likewise, when the threshold $r_t < 0$, the threshold is below the minimum feasible score level, so the true gradient must be $g_t = \alpha - 1$. 
Therefore, whenever the threshold $r_t$ lies outside the interval $[0 , B)$, which we refer to as the \textit{out-of-range} regime, the learner can recover the true gradient directly from the threshold value itself.

\begin{figure}[t]
	\centering
	\includegraphics[width=0.5\textwidth]{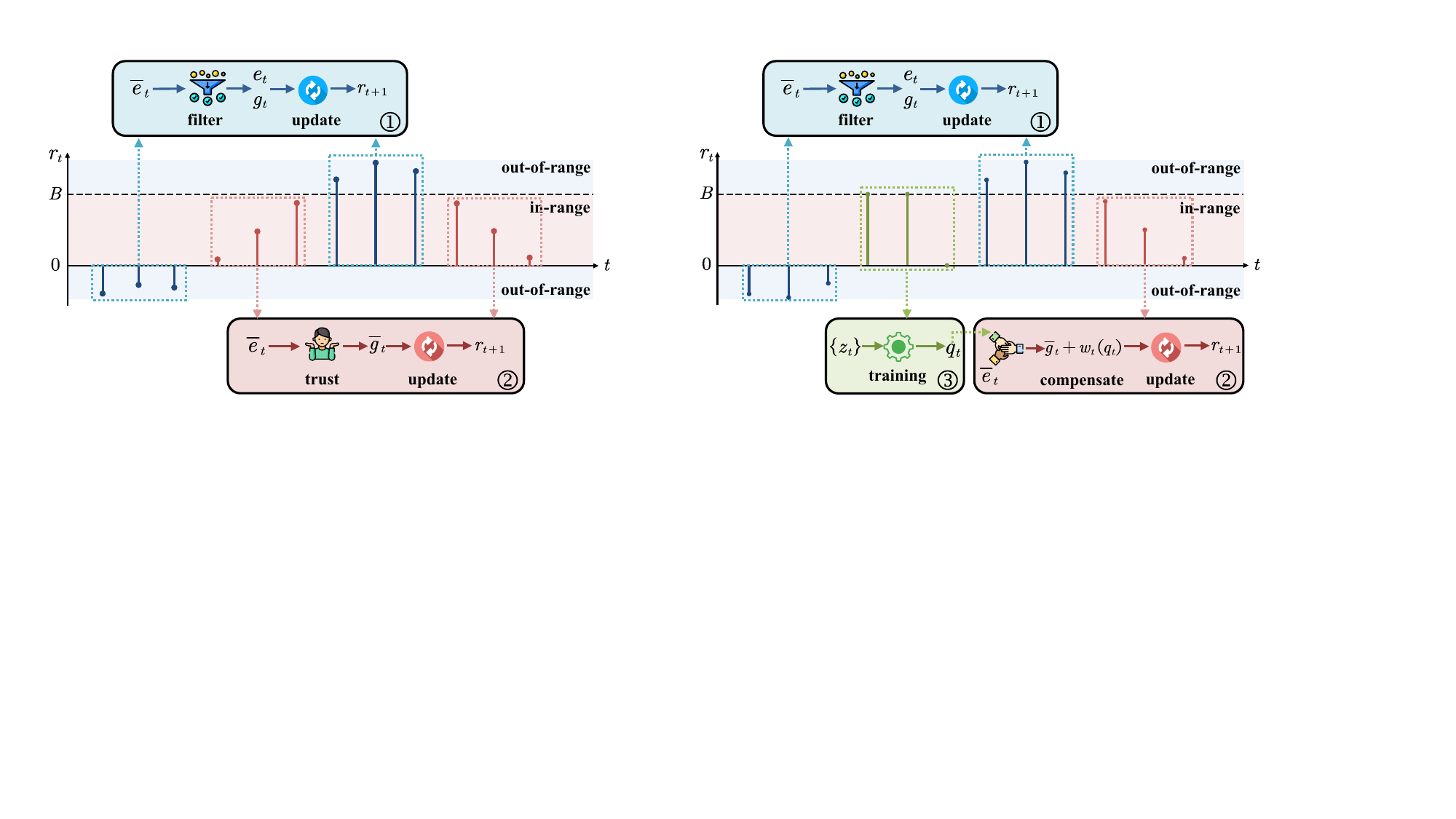}
	\caption{Illustration of F-ROCP:  
	When the threshold $r_t$ is out-of-range, the true gradient can be inferred exactly, and the correct OCP update is applied to  the threshold.
	When the threshold is in-range, the gradient information may be corrupted,  but F-ROCP directly uses it to update the threshold.
	}
	\label{Fig_FROCP}
\end{figure}

Motivated by this observation, we propose F-ROCP, a variant of OCP that filters the corrupted feedback based on the value of the threshold $r_t$. 
As summarized in Algorithm \ref{Alg_1} and Fig. \ref{Fig_FROCP}, when the threshold $r_t$ is out-of-range, i.e., $r_t \notin [0, B)$ (step \ding{173} and blue region in Fig. \ref{Fig_FROCP}), the algorithm rejects the corrupted feedback and instead infers the ideal gradient according to the equation
\begin{equation}\label{eq_infer_grad}
	g_t = \left\{ 
		\begin{array}{ll}
		\alpha , & \text{ if } r_t \ge B, \\
		\alpha - 1 , & \text{ if } r_t < 0 .
		\end{array}
	\right.
\end{equation}
The threshold is then updated using this inferred ideal gradient.
In contrast, when the threshold $r_t \in [0, B)$, which we refer to as the \textit{in-range} regime, i.e., the red region in Fig. \ref{Fig_FROCP}, F-ROCP uses the observed corrupted gradient $\bar{g}_t$ directly. 
We denote the set of in-range time indices as
\begin{equation}
	\mathcal{I} = \{ t \in \{ 1 , \dots, T\} : 0 \le r_t < B \}.
	\label{eq_interval}
\end{equation}

\begin{algorithm}[t]\label{Alg_1}
\caption{Robust Online Conformal Prediction via Filtering (F-ROCP)}
\KwIn{Miscoverage Rate $\alpha$}

Initialize $r_1$\\
\For{$t = 1$ \KwTo $T$}{	

	\textbf{Observe} an input $X_t \in \mathcal{X}$\\
	\textbf{Return} the \textit{prediction set} via \eqref{Eq_Pre_Set}\\
	Receive corrupted feedback $\bar{g}_t = \alpha - \bar{e}_t$\\

	\uIf{$r_t < 0$ or $r_t \ge B$ (Out-of-range threshold)}
	{
		
		Infer true gradient $g_t$ based on \eqref{eq_infer_grad}\\
		Update the threshold $r_{t+1} = r_t - \eta g_t$\\
	}
	\Else({\textit{(in-range threshold)}}){
		Update the threshold $r_{t+1} = r_t - \eta \bar{g}_t$\\
	}
}
\end{algorithm}

\subsection{Miscoverage Analysis}

The miscoverage performance of F-ROCP is characterized by the following theorem.
\begin{theorem}\label{Theo_2}
	Under Assumption~\ref{Assumption_1}, the empirical miscoverage rate under gradient corruption satisfies the upper bound
	\begin{equation}\label{eq_16_New}
		\mathrm{MisCov}(T)
		\le \frac{ B + \eta }{\eta T}
		+ \frac{ \left| \tilde{G}_{0\to1}^{|\mathcal{I}|} - \tilde{G}_{1\to0}^{|\mathcal{I}|} \right| }{T} ,
	\end{equation}
	where $\tilde{G}_{0\to1}^{|\mathcal{I}|} = \left| \{ t \in \mathcal{I} : g_t - \bar{g}_t = 1 \} \right|$ and $\tilde{G}_{1\to0}^{|\mathcal{I}|} = \left| \{ t \in \mathcal{I} : g_t - \bar{g}_t = -1 \} \right|$ denote the number of feedback errors that flip the gradient from $g_t = \alpha$ to $\bar{g}_t = \alpha - 1$ and from $g_t = \alpha - 1$ to $\bar{g}_t = \alpha$, respectively, only among the in-range time indices.
\end{theorem}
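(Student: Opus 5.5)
Our strategy follows the standard telescoping argument behind Lemma~\ref{Lem_OCP_Clean}. We decompose the applied gradient into the true gradient plus a correction supported only on in-range corrupted rounds, and we show that the filtering rule keeps the iterate uniformly bounded. Let $\hat g_t$ denote the gradient actually used by F-ROCP, so that $\hat g_t = g_t$ for $t\notin\mathcal{I}$ and $\hat g_t=\bar g_t$ for $t\in\mathcal{I}$. We write $\hat g_t = g_t - (g_t-\hat g_t)$, where $g_t - \hat g_t \in\{-1,0,1\}$ and the difference is nonzero only for $t\in\mathcal{I}$ with $z_t=1$. Summing $r_{t+1}=r_t-\eta\hat g_t$ over $t=1,\dots,T$ gives
\begin{equation*}
\sum_{t=1}^T (\alpha - e_t) = \sum_{t=1}^T g_t = \frac{r_1-r_{T+1}}{\eta} + \sum_{t\in\mathcal{I}} (g_t-\bar g_t) = \frac{r_1-r_{T+1}}{\eta} + \tilde G^{|\mathcal{I}|}_{0\to1} - \tilde G^{|\mathcal{I}|}_{1\to0}.
\end{equation*}
Dividing by $T$ and applying the triangle inequality yields
\begin{equation*}
\mathrm{MisCov}(T)\le \frac{|r_1-r_{T+1}|}{\eta T} + \frac{|\tilde G^{|\mathcal{I}|}_{0\to1}-\tilde G^{|\mathcal{I}|}_{1\to0}|}{T}.
\end{equation*}
It therefore remains to show that $|r_1-r_{T+1}|\le B+\eta$.

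The key step is a uniform iterate bound: assuming $r_1\in[0,B]$, as in Lemma~\ref{Lem_OCP_Clean}, we claim that $r_t\in[-\eta(1-\alpha),\,B+\eta\alpha)$ for every $t$. We prove this by induction with a case split.
\begin{itemize}
\item If $r_t\ge B$, the filter forces $\hat g_t=\alpha$, so $r_{t+1}=r_t-\eta\alpha$ decreases. The threshold cannot have climbed far above $B$: it can only exceed $B$ by an upward step taken from $r_t<B$, and such a step has size at most $\eta(1-\alpha)$.
\item If $r_t<0$, the filter forces $\hat g_t=\alpha-1$, so the threshold increases by $\eta(1-\alpha)$.
\item If $r_t\in[0,B)$, an arbitrary (possibly corrupted) step moves the threshold by at most $\max\{\eta\alpha,\eta(1-\alpha)\}\le\eta$.
\end{itemize}
Combining the cases, the iterate leaves $[0,B)$ by at most one step, after which it is pushed back monotonically. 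Hence $r_t \in [-\eta\alpha,\, B+\eta(1-\alpha)]$, up to the exact constants, which gives $|r_1-r_{T+1}|\le B+\eta$ when $r_1\in[0,B]$.

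The main obstacle is keeping the constants in this bound tight, so that we obtain exactly $B+\eta$ and not something like $B+2\eta$. The crucial observation is that the filter makes out-of-range updates identical to those of ideal OCP. The only difference from the clean case is therefore that in-range steps may go in the wrong direction, and those steps stay within one step of $[0,B)$. The clean-case argument of \cite{gibbs2021adaptive} bounds the iterate in $[-\eta\alpha, B+\eta(1-\alpha)]$ using exactly these two properties, since it needs only that out-of-range steps point inward and that every step has size at most $\eta$. We would therefore reuse that argument verbatim, noting that it never requires in-range steps to be correct.
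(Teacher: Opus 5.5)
Your proposal is correct and matches the paper's route. The paper reuses the telescoping argument of Theorem~\ref{Theo_1}: the filter makes every out-of-range update exact, so the correction sum lives only on $\mathcal{I}$, and the iterate bound loses the $G$-dependent drift term.

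The only slip is that the interval you first claim, $[-\eta(1-\alpha),\, B+\eta\alpha)$, has the constants swapped and fails in general when $\alpha\neq 1/2$. The correct invariant is $[-\eta\alpha,\, B+\eta(1-\alpha))$, because every upward step starts below $B$ and every downward step starts at or above $0$. With $r_1\in[0,B]$ this gives $|r_1-r_{T+1}|\le B+\eta$ directly, so tightness of the constants is not actually an obstacle.
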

\begin{IEEEproof}
	The proof uses an argument similar to that of Theorem \ref{Theo_1} in supplementary material Appendix \ref{App_Theo_1}.
\end{IEEEproof}

The miscoverage bound in Theorem \ref{Theo_2} can be interpreted as follows:
\begin{itemize}
	\item The first term is the same as in the ideal feedback case \eqref{eq_mis_ideal_ocp} and in Theorem \ref{Theo_1}, vanishing as the time horizon $T$ grows.
	\item Meanwhile, similar to the second term in Theorem \ref{Theo_1}, the second term in \eqref{eq_16_New} captures the effect of directional imbalance in the corruption process, but only over the in-range time steps $\mathcal{I}$.
\end{itemize}

Comparing the upper bounds in Theorem \ref{Theo_2} and Theorem \ref{Theo_1}, one cannot conclude that F-ROCP 
necessarily improve miscoverage performance over OCP. However, as will be shown in Sec.~\ref{Sec_6} and Sec.~\ref{Sec_7},  F-ROCP can be shown to  provably improves upon OCP under corrupted feedback, in terms of these bounds, when specializing the general results of the theorems to relevant feedback error models.

\section{Robust Online Conformal Prediction via Active Compensation}\label{Sec_5}
While F-ROCP eliminates the drift in the threshold iterate $r_t$ by filtering corrupted gradients in the out-of-range regime, it still relies on the corrupted gradient $\bar{g}_t$ during in-range rounds.
To address this limitation, we propose \emph{robust OCP via active compensation} (AC-ROCP), which augments F-ROCP with an active compensation step during in-range rounds. 
The protocol is presented next, followed by a formal analysis of its miscoverage performance.

\begin{figure}[t]
	\centering
	\includegraphics[width=0.5\textwidth]{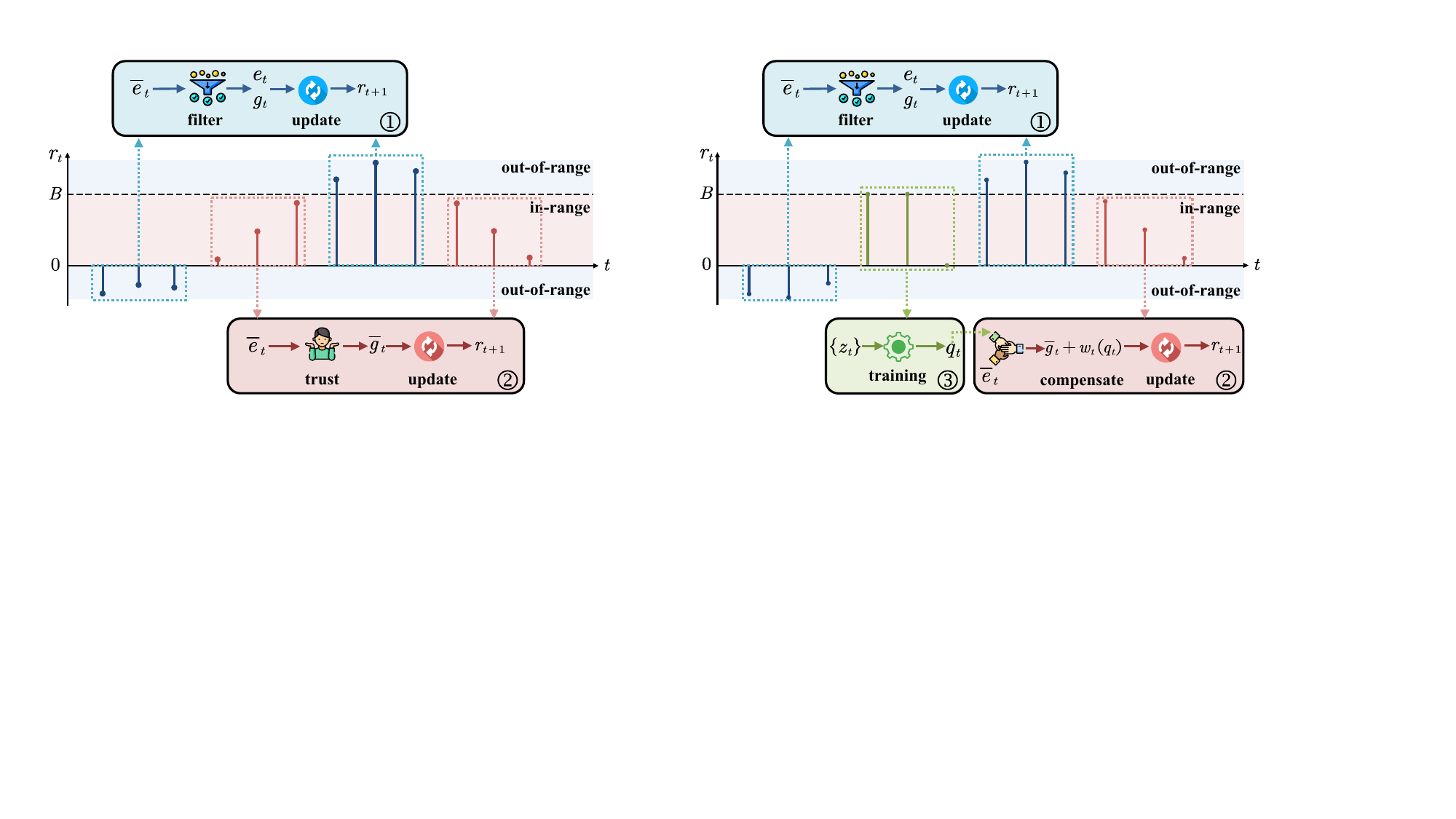}
	\caption{Illustration of AC-ROCP:  
	When the threshold $r_t$ is out-of-range (blue region), the true gradient can be  inferred exactly, and the OCP update can be applied directly. 
	When the threshold is in-range (red region), the possibly corrupted gradient is compensated by a corrective term $w_t(q_t)$ based on the predictor $q_t$, and the compensated gradient is used to update the threshold.}
	\label{Fig_ACROCP}
\end{figure}

\subsection{Protocol}
As shown in Fig. \ref{Fig_ACROCP}, the AC-ROCP protocol consists of three components: 
a \textit{filtering mechanism} (module \ding{172} in Fig. \ref{Fig_ACROCP}) that filters corrupted feedback in the out-of-range regime,
a \textit{compensation mechanism} (module \ding{173} in Fig. \ref{Fig_ACROCP}) that corrects the corrupted gradient during in-range rounds, 
and an \textit{active training} protocol (module \ding{174} in Fig. \ref{Fig_ACROCP}) that acquires signals to train the compensation mechanism.

Since the filtering mechanism is identical to that of F-ROCP, we focus on describing the compensation mechanism and active training protocol in the following.

\subsubsection{Compensation Mechanism}

The compensation mechanism is motivated by \emph{optimistic online convex optimization} \cite{orabona2019modern}, where a learner incorporates a prediction of the next gradient into the update step. 
If the prediction is accurate, this approach can accelerate convergence, leading to lower regret \cite{orabona2019modern}.
Motivated by this principle, AC-ROCP introduces a predictor $q_t$ that estimates the feedback corruption process, which is leveraged to partially compensate the effect of feedback corruption on the threshold update.

The predictor $q_t$ of the feedback error indicator $z_t$  will be discussed for specific corruption models in Sec. \ref{Sec_6} and \ref{Sec_7}. Assuming for now a generic prediction signal, 
AC-ROCP augments F-ROCP by adding a compensation term obtained as a function $w_t(q_t)$ of the prediction $q_t$ to the corrupted gradient $\bar{g}_t$ whenever the threshold is in-range, i.e., if $r_t \in [0, B)$. 
Accordingly, the update rule is
\begin{equation}\label{eq_25}
	r_{t+1} = \left\{ 
		\begin{array}{ll}
			r_t - \eta ( \bar{g}_t + w_t(q_t) ) , & \text{if } 0 \le r_t < B, \\
			r_t - \eta ( \alpha - 0) , & r_t \ge B, \\
			r_t - \eta ( \alpha - 1) , & \text{otherwise},
		\end{array}
	\right.
\end{equation}
Intuitively, the term $w_t (q_t)$ acts as a corrective offset that makes the compensated gradient $\bar{g}_t + w_t (q_t)$ as close as possible to the true gradient $g_t$.

To design the correction, we first consider the ideal case in which the feedback error indicator $z_t$ is known. In this setting, the optimal correction would be
\begin{equation}\label{eq_28}
	\begin{aligned}
		w_t(z_t) & = g_t - \bar{g}_t
		= \bar{e}_t - e_t \\
		&  =  ( 2 \bar{e}_t  - 1 ) z_t ,
	\end{aligned}
\end{equation}
where the second equality  follows from the fact that the error indicator is unchanged when \(z_t=0\), and is flipped when \(z_t=1\), i.e., we have $e_t = (1-z_t)\bar e_t + z_t(1-\bar e_t)$. 
However, since the feedback error indicator $z_t$ is unknown, we propose to replace it with the predictor $q_t$, yielding
\begin{equation}\label{eq_29}
	w_t(q_t)
	= ( 2 \bar{e}_t  - 1 ) q_t .
\end{equation}

\subsubsection{Active Training Protocol}
To train the predictor $q_t$, AC-ROCP requires access to the true feedback error indicator $z_t$. 
However, the true feedback error indicator $z_t$ is unavailable under arbitrary corruption, creating a circular problem: the predictor cannot be trained without clean observations, yet clean observations are precisely what corruption withholds. 

To resolve this issue, we introduce a set of active training rounds $\mathcal{P} \subseteq \{ 1 , \cdots, T \}$, at which AC-ROCP deliberately probes the environment to obtain the true feedback error indicator $z_t$.
Ordering the training rounds as $\mathcal{P} = \{ \tau_1 < \tau_2 < \cdots < \tau_{|\mathcal{P}|} \}$, the threshold at the $i$-th training round, i.e., $t = \tau_{i}$, is set deterministically as
\begin{equation}\label{eq_30}
    r_{\tau_i} = \left\{  
        \begin{array}{ll}
            B, & \text{if } i \leq  \lceil (1-\alpha)|\mathcal{P}| \rceil , \\
            0,          & \text{if } i > \lceil (1-\alpha)|\mathcal{P}| \rceil,
        \end{array}
    \right.
\end{equation}
so that the first $\lceil (1-\alpha)|\mathcal{P}| \rceil$ training steps return a full prediction 
set and the last $\lceil (1-\alpha)|\mathcal{P}| \rceil$ training steps return an empty prediction set.
By construction, the miscoverage rate over the training rounds 
approximately matches the target level $\alpha$, and thus the active training 
rounds do not violate the long-run coverage guarantee.

Furthermore, also by design, during the training rounds the true feedback error indicator $z_t$ can be inferred deterministically as\begin{equation}
	z_{\tau_i} = \left\{ 
		\begin{array}{ll}
		\bar{e}_{\tau_i} , & \text{if } i \leq  \lceil (1-\alpha)|\mathcal{P}| \rceil , \\
		1 - \bar{e}_{\tau_i} , & \text{if } i > \lceil (1-\alpha)|\mathcal{P}| \rceil .
		\end{array}
	\right.
\end{equation}
This recovery is immune to corruption and yields samples of feedback error indicator $z_t$ for training the predictor $q_t$.

In order to keep track of the  time instants $\mathcal{P}$, which  interrupt the standard OGD evolution in
\eqref{eq_25}, we introduce an auxiliary threshold sequence $\{h_t\}_{t\ge1}$
indexed by the original time variable $t$. The sequence is kept unchanged on
training rounds and is updated only on non-training rounds:
\begin{equation}\label{Neq_31}
	h_{t+1} = \left\{ 
		\begin{array}{ll}
			h_t - \eta ( \bar{g}_t + w_t(q_t) ) , & t \notin \mathcal{P}, 0 \le h_t < B, \\
			h_t - \eta ( \alpha - 0) , & t \notin \mathcal{P}, h_t \ge B, \\
			h_t - \eta ( \alpha - 1) , & t \notin \mathcal{P}, h_t < 0 , \\
			h_t , & t \in \mathcal{P} .
		\end{array}
	\right.
\end{equation}
On each non-training round $t \notin \mathcal{P}$, the played threshold is set to
$r_t = h_t$.

Algorithm \ref{Alg_2} summarizes the full AC-ROCP procedure. 
At each round, the learner first determines whether the current step is a training step. 
If so, it plays a boundary threshold via \eqref{eq_30} and uses the resulting signal to train $q_t$. 
Otherwise, it plays the hidden threshold $h_{t}$ and updates the hidden process via \eqref{Neq_31} accordingly.

\begin{algorithm}[t]\label{Alg_2}
\caption{AC-ROCP with Active Training}
\KwIn{Miscoverage Rate $\alpha$, Training Step Set $\mathcal{P}$}

Initialize $r_1$, $h_1$, and $q_1$\\
\For{$t = 1$ \KwTo $T$}{	

	\textbf{Observe} an input $X_t \in \mathcal{X}$\\
	\eIf{$t \in \mathcal{P}$}
	{
		Update the played threshold $r_{t}$ via \eqref{eq_30}
	}
	{
		Update the played threshold $r_{t} = h_{t}$\\
	}
	\textbf{Return} the \textit{prediction set} via \eqref{Eq_Pre_Set}\\
	Receive corrupted feedback $\bar{g}_t = \alpha - \bar{e}_t$\\

	\If{$t \in \mathcal{P}$}
	{
		Train the predictor $q_t$
	}

	Update hidden threshold $h_{t+1}$ via \eqref{Neq_31}\\
	
}
\end{algorithm}

\subsection{Miscoverage Analysis}

The miscoverage performance of AC-ROCP depends on the prediction error, i.e., on the discrepancy between prediction $q_t$ and feedback signal $z_t$, in a way that  is characterized in the following theorem. 
\begin{theorem}\label{Theo_3}
	Under Assumption~\ref{Assumption_1}, with a bounded compensation term $w_t (q_t) \in [-W, W]$, the empirical miscoverage rate of AC-ROCP under gradient corruption satisfies the upper bound
	\begin{subequations}
		\begin{align}
			 \mathrm{MisCov}(T)  &\le \frac{B + \eta(W + 2)}{\eta T} 
			+ \frac{1}{T} \left| \sum_{t \in \mathcal{I}} \left( 2 \bar{e}_t - 1 \right) \left( z_t - q_t \right) \right| \label{eq_28_a} \\
			& \le \frac{B + \eta(W + 2)}{\eta T} 
			+  \frac{1}{T} \sum_{t \in \mathcal{I}} \left|  z_t - q_t \right| ,  \label{eq_28_b}
		\end{align}
	\end{subequations}
	where $\mathcal{I}$ is the set of in-range time indices defined in \eqref{eq_interval}.
\end{theorem}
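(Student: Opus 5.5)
The plan is to follow the telescoping argument behind Lemma~\ref{Lem_OCP_Clean} and Theorem~\ref{Theo_2}, applied to the hidden sequence $\{h_t\}$ of \eqref{Neq_31}. The training rounds $\mathcal{P}$ will be handled separately.

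First, split the miscoverage sum as
\[
\sum_{t=1}^T (\alpha - e_t) = \sum_{t\notin\mathcal{P}} (\alpha-e_t) + \sum_{t\in\mathcal{P}} (\alpha - e_t).
\]
On the training rounds, \eqref{eq_30} and Assumption~\ref{Assumption_1} fix the true indicators. With $r_{\tau_i}=B$ we get $e_{\tau_i}=0$. With $r_{\tau_i}=0$ we get $e_{\tau_i}=1$; this implicitly treats the boundary score $s_t=0$ as negligible or as a convention. There are $\lceil(1-\alpha)|\mathcal{P}|\rceil$ rounds of the first kind. The ceiling therefore makes $\bigl|\sum_{t\in\mathcal{P}}(\alpha-e_t)\bigr| \le 1$, which will supply one $\eta$-unit of the constant $B+\eta(W+2)$.

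Second, on non-training rounds we have $r_t=h_t$, so $e_t$ and $g_t=\alpha-e_t$ are evaluated at $h_t$. Outside $[0,B)$, \eqref{Neq_31} applies the true gradient $g_t$ exactly, by the same reasoning as \eqref{eq_infer_grad}. Inside $[0,B)$, the identity \eqref{eq_28} gives $\bar g_t = g_t - (2\bar e_t-1)z_t$. Combined with \eqref{eq_29}, the applied step is
\[
\bar g_t + w_t(q_t) = g_t - (2\bar e_t -1)(z_t - q_t).
\]
Summing \eqref{Neq_31} over $t\notin\mathcal{P}$, where $h$ is frozen on $\mathcal{P}$, telescopes to
\[
\sum_{t\notin\mathcal{P}} g_t = \frac{h_1 - h_{T+1}}{\eta} + \sum_{t\in\mathcal{I}} (2\bar e_t-1)(z_t-q_t).
\]
Here $\mathcal{I}$ is understood as the in-range non-training rounds; on training rounds no compensation enters the dynamics.

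Third, and this is the main obstacle, I need a uniform bound on $h_t$ that keeps the constant at $B+\eta(W+2)$. Out of range, the filtered steps always point back toward $[0,B)$: $h$ decreases by $\eta\alpha$ if $h\ge B$ and increases by $\eta(1-\alpha)$ if $h<0$. In range, a single step has magnitude at most $\eta(1+W)$, since $|\bar g_t|\le 1$ and $|w_t|\le W$. By induction, $h_t$ can leave $[0,B)$ only from inside, by at most one such step, and then moves monotonically back. Hence
\[
h_t\in[-\eta(1+W),\,B+\eta(1+W)]
\]
for all $t$, given an initialization $h_1\in[0,B)$. I would then bound $|h_1-h_{T+1}|$ by $B+\eta(1+W)$. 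To get this, I would exploit one-sidedness: $h_1\in[0,B)$ and $h_{T+1}$ overshoots on only one side. The resulting $B+\eta(W+1)$, plus the $\eta$ from the training rounds, yields $B+\eta(W+2)$. If that sharper accounting fails, I would accept a slightly larger constant.

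Finally, use the triangle inequality and divide by $T$ to obtain \eqref{eq_28_a}. Then \eqref{eq_28_b} follows because $|2\bar e_t-1|=1$.
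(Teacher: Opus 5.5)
Your proposal is correct and matches the paper's proof step for step. The paper likewise isolates the training rounds, which contribute $\alpha|\mathcal{P}|-\lfloor\alpha|\mathcal{P}|\rfloor\in[0,1)$. It then telescopes the hidden sequence $h_t$ using $g_t-\bar g_t-w_t(q_t)=(2\bar e_t-1)(z_t-q_t)$ on $\mathcal{I}\subseteq\bar{\mathcal{P}}$, and invokes the iterate bound $-\eta(W+1)\le h_t\le B+\eta(W+1)$ (proved via an invariant interval) to get $|h_1-h_{T+1}|\le B+\eta(W+1)$. Your one-sided accounting goes through exactly as you describe, since $h_1\in[0,B]$, so you do not need a larger constant.
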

\begin{IEEEproof}
	Please see supplementary material Appendix \ref{App_Pro:wt_2}.
\end{IEEEproof}

Theorem \ref{Theo_3} shows that the miscoverage of AC-ROCP depends on two terms: 
\begin{itemize}
	\item The first term resembles the base term in the ideal feedback case \eqref{eq_mis_ideal_ocp} and in Theorem \ref{Theo_2}, but with an additional dependence on the compensation bound $W$. 
	This term reflects the variability in the prediction set \eqref{Eq_Pre_Set} caused by the addition of the compensation $w_t(q_t)$.

	\item The second term in \eqref{eq_28_a} captures the correction performance of AC-ROCP, which depends on the prediction error $z_t - q_t$, or on its absolute value $|z_t - q_t|$ in the relaxed bound in \eqref{eq_28_b}.
\end{itemize}

The next two sections study two representative settings for the feedback error, specialize the results in Theorem \ref{Theo_3} to these settings, and provide further insights into the performance of F-ROCP and AC-ROCP.

\section{Independent Stochastic Error Model}\label{Sec_6}
In this section, we adopt an independent stochastic error model.
Then, we specialize AC-ROCP to this setting and derive explicit miscoverage bounds for OCP, F-ROCP and AC-ROCP.

\subsection{Feedback Error Model}

This section assumes that the feedback error indicator $z_t$ follows an i.i.d.\ Bernoulli process
\begin{equation}\label{eq_zt_iid}
    z_t \overset{\mathrm{i.i.d.}}{\sim} \mathrm{Bern}(p), \quad \forall\, t \in \{1, 2, \ldots, T\},
\end{equation}
where $p \in [0, 0.5)$ is the corruption probability. 
This stochastic model is \emph{memoryless} and \emph{stationary}.
It is well-suited for practical settings such as memoryless communication failures over a noisy channel \cite{cover1999elements}.

\subsection{Designing AC-ROCP for the Independent Stochastic Error Model}\label{Sec_6_2}

We now specialize AC-ROCP to the independent stochastic error model. 
The goal is to design the predictor $q_t$ together with the selection of the training rounds $\mathcal{P}$.

\subsubsection{Optimal Predictor under Known \texorpdfstring{$p$}{p}}

To build intuition, suppose first that the feedback corruption probability $p$ is known.
Under this assumption, we aim at minimizing the upper bound in \eqref{eq_28_a} on average, i.e.,
\begin{equation}
    \begin{aligned}
        \min_{q} \! \left\{ \! \mathbb{E}_{z_t} \!\! \left[
          \left| \bigl(2\bar{e}_t \!-\! 1\bigr)(z_t \!-\! q_t) \right|
        \right]  
		\!=\! \left| (1 \!-\! 2 e_t) \right| (p \!-\! (2p \!-\! 1)q_t) \right\}.
    \end{aligned}
\end{equation}
The resulting optimal predictor can be readily derived as 
\begin{equation}\label{eq:q_optimal}
    q_t^* = \frac{p}{2p-1} .
\end{equation}

\subsubsection{Predictor under Unknown \texorpdfstring{$p$}{p}}

In practice, the feedback corruption probability $p$ is unknown. 
As shown in Fig~\ref{fig_iid}, to estimate it, we reserve the first $|\mathcal{P}|$ rounds as a training phase, i.e.,
\begin{equation}\label{eq:training_rounds}
    \mathcal{P} = \{1, 2, \ldots, |\mathcal{P}|\} .
\end{equation}
During this training phase, the true feedback error indicators  $\{z_t\}_{t \in \mathcal{P}}$ are collected and used to estimate the feedback corruption probability $p$.

Specifically, we employ a truncated Krichevsky--Trofimov (KT) estimator \cite{KT1981TIT} to track the corruption probability $p$ online during the training phase as
\begin{equation}\label{eq_training_KT}
    \hat{p}_t = \min \left\{ \frac{0.5 + \sum_{i \in \mathcal{P}_t} z_i}{|\mathcal{P}_t| + 1} , \gamma \right\} ,
\end{equation}
where $\mathcal{P}_t = \{i \in \mathcal{P} : i < t\}$ is the set of training rounds before time $t$.
and $\gamma \in (0, 0.5)$ is a hyperparameter.

\begin{figure}
	\centering
	\includegraphics[width=0.475\textwidth]{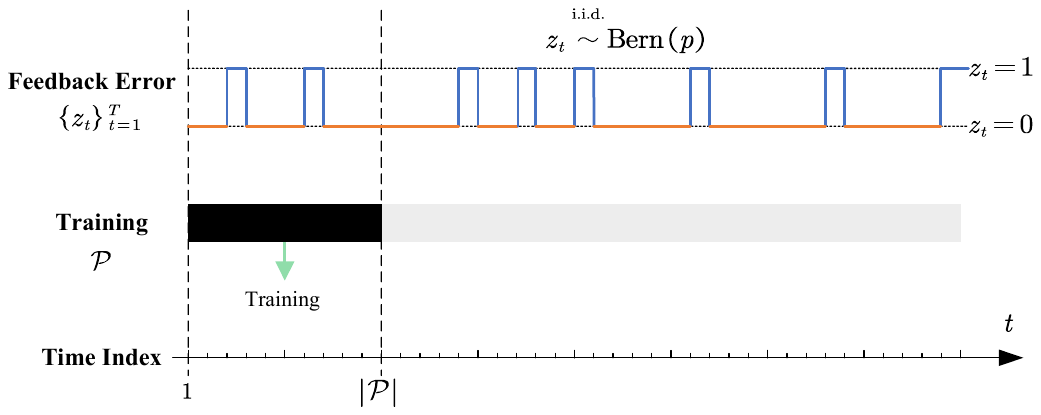}
	\caption{Illustration of the independent stochastic feedback error model shown in Sec. \ref{Sec_6}. The time horizon is partitioned into two intervals, where the first $|\mathcal{P}|$ rounds are used for training.}
	\label{fig_iid}
\end{figure}

\subsection{Performance Analysis}

We now derive explicit miscoverage bounds for all three methods under the independent stochastic error model.

\begin{corollary}[OCP under Independent Stochastic Error]\label{Coro_3_1}
	Specializing Theorem~\ref{Theo_1} to the independent stochastic error model, the empirical miscoverage rate of OCP satisfies, with probability at least $1 - \delta$, the upper bound
    \begin{equation}
        \mathrm{MisCov}(T)
        \le \frac{B + \eta}{\eta T}
        + 2
          \left(p + \sqrt{\frac{\log(1/\delta)}{2T}}\right).
    \end{equation}
\end{corollary}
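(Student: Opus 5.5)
We start from the bound of Theorem~\ref{Theo_1} and control its two corruption-dependent terms by the total number of corruptions $G^T = G_{0\to1}^T + G_{1\to0}^T$. No cancellation between the two flip directions is needed. Since both counts are nonnegative, $|G_{0\to1}^T - G_{1\to0}^T| \le \max\{G_{0\to1}^T, G_{1\to0}^T\} \le G^T$. Since $\alpha \in [0,1]$, $\max\{\alpha G_{1\to0}^T, (1-\alpha)G_{0\to1}^T\} \le \max\{G_{1\to0}^T, G_{0\to1}^T\} \le G^T$. Substituting both into \eqref{New_2_15} gives the deterministic, pathwise bound
\begin{equation*}
\mathrm{MisCov}(T) \le \frac{B+\eta}{\eta T} + \frac{2 G^T}{T},
\end{equation*}
which holds for any flip sequence. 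The remaining task is purely probabilistic.

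Under the model \eqref{eq_zt_iid}, $G^T = \sum_{t=1}^T z_t$ is a sum of $T$ i.i.d.\ Bernoulli$(p)$ variables taking values in $[0,1]$. Hoeffding's inequality gives $\Pr\{G^T/T - p \ge \epsilon\} \le \exp(-2T\epsilon^2)$. Choosing $\epsilon = \sqrt{\log(1/\delta)/(2T)}$, with probability at least $1-\delta$ we have $G^T/T \le p + \sqrt{\log(1/\delta)/(2T)}$. Plugging this into the deterministic bound yields the claimed inequality.

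The one point needing care is the use of Hoeffding's inequality. The $z_t$ are i.i.d.\ by assumption, while the trajectory $r_t$, and hence the split of corruptions into $G_{0\to1}^T$ and $G_{1\to0}^T$, depends on them. This dependence is harmless because the crude bound only involves the total count $\sum_t z_t$, whose distribution is Binomial$(T,p)$ regardless of the data sequence or the threshold dynamics. If the data sequence is adversarially adaptive, I would state that the $z_t$ are independent of everything else, or use Azuma's inequality on the martingale $\sum_t (z_t - p)$; either gives the same constant. I expect no further obstacle: the factor $2$ in the statement comes directly from the two terms of Theorem~\ref{Theo_1}, each bounded by $G^T/T$.
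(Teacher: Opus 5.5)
Your proposal is correct and matches the paper's proof. Both bound the two corruption-dependent terms of Theorem~\ref{Theo_1} by the total count $G^T=\sum_{t=1}^T z_t$ (the paper through the factor $1+\max\{\alpha,1-\alpha\}\le 2$, you term by term), and then apply Hoeffding's inequality to this Binomial$(T,p)$ sum, a point the paper invokes without comment and which you justify by noting that only the total count, not its split into $G_{0\to1}^T$ and $G_{1\to0}^T$, needs to concentrate.
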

\begin{IEEEproof}
    Please see supplementary material Appendix~\ref{App_Coro_3_1}.
\end{IEEEproof}

Corollary~\ref{Coro_3_1} establishes that the miscoverage error of OCP scales linearly with the feedback corruption probability $p$.

\begin{corollary}[F-ROCP under Independent Stochastic Error]\label{Coro_4_1}
	Specializing Theorem~\ref{Theo_2} to the independent stochastic error model, the empirical miscoverage rate of F-ROCP satisfies, with probability at least $1 - \delta$, the upper bound
	    \begin{equation}
        \mathrm{MisCov}(T)
        \le \frac{B + \eta}{\eta T}
        + \frac{|\mathcal{I}|}{T}\left(p + \sqrt{\frac{\log(1/\delta)}{2|\mathcal{I}|}}\right).
    \end{equation}
\end{corollary}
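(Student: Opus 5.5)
Starting from Theorem~\ref{Theo_2}, only the second term needs bounding. The difference of two nonnegative counts is at most their sum, so
\begin{equation*}
\left| \tilde{G}_{0\to1}^{|\mathcal{I}|} - \tilde{G}_{1\to0}^{|\mathcal{I}|} \right| \le \tilde{G}_{0\to1}^{|\mathcal{I}|} + \tilde{G}_{1\to0}^{|\mathcal{I}|} = \sum_{t\in\mathcal{I}} z_t ,
\end{equation*}
since every in-range round with $z_t=1$ contributes to exactly one of the two counts. It then suffices to show that, with probability at least $1-\delta$,
\begin{equation*}
\sum_{t\in\mathcal{I}} z_t \le |\mathcal{I}|\, p + \sqrt{|\mathcal{I}|\log(1/\delta)/2}.
\end{equation*}
Dividing by $T$ gives the stated bound, because $\frac{|\mathcal{I}|}{T}\sqrt{\frac{\log(1/\delta)}{2|\mathcal{I}|}} = \frac{1}{T}\sqrt{|\mathcal{I}|\log(1/\delta)/2}$.

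The main obstacle is that $\mathcal{I}$ is random and depends on the past corruptions $z_1,\dots,z_{t-1}$ through the threshold iterates. A direct Hoeffding bound over a fixed index set therefore does not apply. The key observation is that the event $\{t\in\mathcal{I}\}=\{0\le r_t<B\}$ is determined by $r_t$, which depends only on $(X_s,Y_s,z_s)_{s<t}$ and $r_1$. It is therefore predictable, while $z_t$ is independent of the past. Hence
\begin{equation*}
M_k=\sum_{t\le k}\mathbbm{1}\{t\in\mathcal{I}\}(z_t-p)
\end{equation*}
is a martingale with increments in $[-p,1-p]$, a range of length $1$. This requires treating the data sequence as fixed or oblivious, i.e., independent of the flips. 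An adaptive adversary choosing the data after seeing past flips would still keep $z_t$ independent of $\mathcal{F}_{t-1}$, which is all the argument needs.

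To get the $\sqrt{|\mathcal{I}|}$ scaling despite the random number of summands, I will use a random time change. Let $t_1<t_2<\cdots$ enumerate the in-range rounds; these are stopping times. By independence of $z_{t_j}$ from $\mathcal{F}_{t_j-1}$ (optional skipping), the variables $z_{t_1}, z_{t_2},\ldots$ are again i.i.d.\ $\mathrm{Bern}(p)$. I will extend this sequence with auxiliary i.i.d.\ Bernoulli draws beyond $|\mathcal{I}|$, so that it is an i.i.d.\ sequence indexed by $j$.

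For a fixed $n$, Hoeffding gives $\Pr\{\sum_{j\le n}(z_{t_j}-p)>\sqrt{n\log(1/\delta)/2}\}\le\delta$. Applying this at the random $n=|\mathcal{I}|$ literally requires either a union bound over $n\le T$, which inflates the bound to $\log(T/\delta)$, or a conditioning argument.

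I will present the argument at the level of precision the paper uses: conditional on the realized in-range set, the in-range flips are i.i.d.\ $\mathrm{Bern}(p)$, and Hoeffding applies with $n=|\mathcal{I}|$. I will also remark that a fully rigorous treatment of the random count replaces $\log(1/\delta)$ by $\log(T/\delta)$ via a union bound over $n$. This mirrors how Corollary~\ref{Coro_3_1} is obtained with $n=T$ deterministic, where the step $|G_{0\to1}^T-G_{1\to0}^T|+\max\{\cdot\}\le 2G^T$ is followed by Hoeffding.
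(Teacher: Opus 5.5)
\textbf{Gap: the conditional-i.i.d.\ step is false.} Your route matches the paper's. You reduce via $\bigl|\tilde{G}_{0\to1}^{|\mathcal{I}|}-\tilde{G}_{1\to0}^{|\mathcal{I}|}\bigr|\le\sum_{t\in\mathcal{I}}z_t$ and then apply Hoeffding with $n=|\mathcal{I}|$; the paper's proof is just ``similarly to Corollary~\ref{Coro_3_1}''. You are also right that $\mathcal{I}$ is random. But the sentence you use to justify Hoeffding at $n=|\mathcal{I}|$ is wrong. Conditional on the realized set $\mathcal{I}$, the in-range flips are \emph{not} i.i.d.\ $\mathrm{Bern}(p)$, because for $t\in\mathcal{I}$ the flip $z_t$ determines $r_{t+1}$, and hence whether $t+1\in\mathcal{I}$.

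Concretely, take $B\ge\eta$, $r_t\in[B-\eta(1-\alpha),B)$ and $e_t=0$. If $z_t=1$, then $r_{t+1}=r_t+\eta(1-\alpha)\ge B$, so $t+1\notin\mathcal{I}$. If $z_t=0$, then $r_{t+1}=r_t-\eta\alpha\in[0,B)$. So conditioning on $\{t+1\in\mathcal{I}\}$ forces $z_t=0$.

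\textbf{Why the fixed-$n$ bound does not transfer.} The correct facts you did establish, predictability of $\{t\in\mathcal{I}\}$ and optional skipping, point the other way. F-ROCP ignores out-of-range feedback, so for an oblivious data sequence the event $\{|\mathcal{I}|\ge n+1\}$ is determined by $z_{t_1},\dots,z_{t_n}$. Hence $|\mathcal{I}|$ is a stopping time of the skipped i.i.d.\ sequence. A fixed-$n$ Hoeffding tail does not in general survive evaluation at a stopping time with the same $\log(1/\delta)$: stopping at the first crossing of $c\sqrt{n}$ drives the violation probability to one as $T\to\infty$, by the law of the iterated logarithm. The paper's one-line derivation treats $\mathcal{I}$ as a fixed index set in the same way, so it does not close this gap either.

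\textbf{Two rigorous fixes.}
\begin{enumerate}
\item Apply Azuma--Hoeffding directly to your martingale $M_T=\sum_{t\le T}\mathbbm{1}\{t\in\mathcal{I}\}(z_t-p)$. Its increments lie in predictable intervals of length $\mathbbm{1}\{t\in\mathcal{I}\}\le 1$, so the conditional Hoeffding lemma plus a Chernoff bound give $\Pr\{M_T>\sqrt{T\log(1/\delta)/2}\}\le\delta$. This yields $\mathrm{MisCov}(T)\le\frac{B+\eta}{\eta T}+\frac{|\mathcal{I}|}{T}p+\sqrt{\frac{\log(1/\delta)}{2T}}$, i.e.\ the claimed bound with $|\mathcal{I}|$ replaced by $T$ inside the square root.
\item Apply Hoeffding to the skipped-and-padded i.i.d.\ sequence at every $n\le T$ with level $\delta/T$, and take a union bound. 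Evaluating at $n=|\mathcal{I}|$ gives the claimed form with $\log(T/\delta)$ in place of $\log(1/\delta)$, as you already remark.
\end{enumerate}
In either case, drop the conditional-i.i.d.\ claim. As written, neither your proposal nor the paper's sketch establishes the displayed bound with both $\log(1/\delta)$ and $\sqrt{|\mathcal{I}|}$.
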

\begin{IEEEproof}
	It can be derived similarly to Corollary \ref{Coro_3_1} in supplementary material Appendix \ref{App_Coro_3_1}.
\end{IEEEproof}

Compared with Corollary~\ref{Coro_3_1}, F-ROCP replaces the term $\mathcal{O}(2p)$ by $\mathcal{O}(p|\mathcal{I}|/T)$. 
Since the inequality $|\mathcal{I}| \le T$ holds by construction, the bound is no worse than that of OCP. 
Nevertheless, the residual factor $|\mathcal{I}| p /T$ shows that corruption concentrated within the in-range regime can still affect coverage.

\begin{corollary}[AC-ROCP under Independent Stochastic Error]\label{Coro_5_1}
    Specializing Theorem~\ref{Theo_3} to the independent stochastic error model with compensation design in Sec. \ref{Sec_6_2}, the empirical miscoverage rate of AC-ROCP satisfies, with probability at least $1 - \delta$, the upper bound
    \begin{equation}
        \mathrm{MisCov}(T)
        \le \frac{B + \eta (W+2)}{\eta T} + H_1 + H_2,
    \end{equation}
    where $W = |p/(2p-1)|$,
    \begin{align}
        H_1 &= \frac{1}{(1-2p)\,T}
               \sqrt{\frac{|\mathcal{I}|\log(4/\delta)}{2}}, \text{ and } \nonumber \\
        H_2 &= \frac{|\mathcal{I}|}{(1 - 2\max\{p,\hat{p}\})^2\,T}
               \left(\sqrt{\frac{\log(4/\delta)}{2(|\mathcal{P}|+1)}}
               + \frac{1}{2(|\mathcal{P}|+1)}\right). \nonumber
    \end{align}
\end{corollary}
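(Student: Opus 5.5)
The plan is to start from the sharper bound \eqref{eq_28_a} of Theorem~\ref{Theo_3}. In that bound the first term is already in the required form, with $W$ the bound on $|w_t(q_t)| = |q_t|$, which is $|p/(2p-1)|$ at the ideal predictor. So the work reduces to controlling
\[
\Big|\sum_{t\in\mathcal{I}}(2\bar e_t-1)(z_t-q_t)\Big| .
\]
On non-training rounds I take the plug-in predictor $q_t=\hat q := \hat p/(2\hat p-1)$, where $\hat p$ is the truncated KT estimate \eqref{eq_training_KT} frozen at the end of the training phase $\mathcal{P}$. I then split the sum around the oracle value $q^*=p/(2p-1)$ from \eqref{eq:q_optimal}:
\[
\sum_{t\in\mathcal{I}}(2\bar e_t-1)(z_t-q^*) \;+\; (q^*-\hat q)\sum_{t\in\mathcal{I}}(2\bar e_t-1).
\]
The first sum will produce $H_1$ and the second will produce $H_2$.

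\textbf{First term (yields $H_1$).} I use the identity $2\bar e_t-1=(2e_t-1)(1-2z_t)$. Since $e_t\in\{0,1\}$ is determined by $r_t$ and $s_t$, it is fixed before $z_t$ is drawn. The increment is then $(2e_t-1)$ times $(1-2z_t)(z_t-q^*)$, and the latter equals $-q^*$ if $z_t=0$ and $q^*-1$ if $z_t=1$. Its conditional mean is $-(1-p)q^*+p(q^*-1)=(2p-1)q^*-p=0$, which is exactly the defining property of $q^*$. Hence the summands $D_t:=(2\bar e_t-1)(z_t-q^*)\mathbbm{1}\{t\in\mathcal{I}\}$ form a martingale difference sequence with respect to the natural filtration, because membership in $\mathcal{I}$ is decided by $h_t$ before $z_t$ is revealed. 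Each $D_t$ ranges over an interval of length $|1-2q^*|=1/(1-2p)$. Azuma--Hoeffding with confidence $\delta/2$ (two-sided, hence $\log(4/\delta)$) then gives
\[
\Big|\sum_{t\in\mathcal{I}}D_t\Big|\le \frac{1}{1-2p}\sqrt{\frac{|\mathcal{I}|\log(4/\delta)}{2}} .
\]

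\textbf{Second term (yields $H_2$).} I bound $|\sum_{t\in\mathcal{I}}(2\bar e_t-1)|\le|\mathcal{I}|$. A direct computation gives
\[
|q^*-\hat q|=\frac{|p-\hat p|}{(1-2p)(1-2\hat p)}\le\frac{|p-\hat p|}{(1-2\max\{p,\hat p\})^2}.
\]
For $|p-\hat p|$, the KT estimate differs from the empirical mean of the $|\mathcal{P}|$ i.i.d.\ samples $z_t$, $t\in\mathcal{P}$, by at most $1/(2(|\mathcal{P}|+1))$. Hoeffding with the remaining $\delta/2$ bounds the deviation of the (rescaled) empirical mean from $p$ by $\sqrt{\log(4/\delta)/(2(|\mathcal{P}|+1))}$. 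Truncation at $\gamma$ can only decrease the error provided $p\le\gamma$, which I would assume. A union bound over the two events gives total probability at least $1-\delta$, and dividing by $T$ yields the stated bound.

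\textbf{Main obstacle.} The delicate step is the martingale argument in the first term, because $|\mathcal{I}|$ is random and depends on the corruption path. I would first apply Azuma on the full horizon with the indicator-weighted increments. The predictable quadratic variation is at most $|\mathcal{I}|/(1-2p)^2$, so a Freedman-type or self-normalized bound, or a union over the possible values of $|\mathcal{I}|$ (costing only a $\log T$ factor that I would absorb or otherwise argue away), gives the $\sqrt{|\mathcal{I}|}$ dependence. A lesser point to check is that $q_t$ must be a constant after training, so that it is independent of the $z_t$ in the in-range rounds.
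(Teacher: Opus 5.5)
Your route is the paper's proof. You split around $q^*=p/(2p-1)$ into an oracle term and an estimation term. For $H_1$ you use a martingale-difference argument with conditional range $1/(1-2p)$ and Azuma--Hoeffding. For $H_2$ you use the bound $|\mathcal{I}|\,|q^*-\hat q|$ with factor $(1-2\max\{p,\hat p\})^{-2}$, followed by KT plus Hoeffding. A union bound spending $\delta/2$ on each event finishes. Your assumption $p\le\gamma$ is the one the paper uses implicitly when it treats truncation as a projection onto $[0,\gamma]$.

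To get $|\mathcal{P}|+1$ rather than $|\mathcal{P}|$ under the square root, write the untruncated error as $\frac{\sum_{t\in\mathcal{P}}z_t-|\mathcal{P}|p}{|\mathcal{P}|+1}+\frac{1/2-p}{|\mathcal{P}|+1}$, as the paper does. Comparing $\hat p$ with $\sum_{t\in\mathcal{P}}z_t/|\mathcal{P}|$ instead leaves $|\mathcal{P}|$ under the square root.

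The step you flag as the main obstacle is a genuine gap, and the paper does not close it either. It applies Azuma--Hoeffding to $\sum_{t\in\mathcal{I}}N_t$ as though $|\mathcal{I}|$ were fixed. In fact $\mathcal{I}$ is determined by the threshold path, and hence by the same $z_t$'s that drive the martingale. Your proposed repairs do not deliver the stated bound:
\begin{itemize}
\item A union bound over the possible values of $|\mathcal{I}|$, applied to the time-changed martingale of in-range increments, replaces $\log(4/\delta)$ by roughly $\log(4T/\delta)$. This cannot be absorbed into a bound whose logarithm has no $T$-dependence.
\item Freedman-type or self-normalized inequalities likewise put a $\log|\mathcal{I}|$ or $\log\log|\mathcal{I}|$ term inside the square root.
\end{itemize}
What is rigorous with the stated constants is Azuma--Hoeffding over all $T-|\mathcal{P}|$ non-training rounds. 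Use the increments $\mathbbm{1}\{t\in\mathcal{I}\}(2\bar e_t-1)(z_t-q^*)$, whose conditional ranges are bounded by the deterministic constant $1/(1-2p)$. This gives $H_1$ with $|\mathcal{I}|$ replaced by $T-|\mathcal{P}|$, which is still $O(1/\sqrt{T})$. So either weaken $H_1$ in this way or keep $|\mathcal{I}|$ and accept the extra logarithm.

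A smaller slip is in your first step. Theorem~\ref{Theo_3}, through Lemma~\ref{lem:2}, needs $W\ge|w_t(q_t)|$ for the predictor actually used. With $q_t=\hat q$ you have $|w_t|=\hat p/(1-2\hat p)$. Since $x\mapsto x/(1-2x)$ is increasing on $[0,1/2)$, this exceeds $|p/(2p-1)|$ whenever $\hat p>p$. Take $W=\hat p/(1-2\hat p)$, or the deterministic $\gamma/(1-2\gamma)$ allowed by the truncation. This changes only an $O(1/T)$ term, and the paper's statement contains the same slip.
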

\begin{IEEEproof}
    Please see supplementary material Appendix~\ref{App_Coro_5_1}.
\end{IEEEproof}

The bound in Corollary~\ref{Coro_5_1} comprises two distinct error terms $H_1$ and $H_2$:
\begin{itemize}
	\item Term $H_1$ reflects the irreducible compensation mismatch that persists even when $p$ is known exactly: it decays as $\mathcal{O}(\sqrt{|\mathcal{I}|}/T)$ and grows as $p$ tends to $0.5$, where corruption becomes maximally ambiguous. 
	\item Term $H_2$ quantifies the additional degradation due to estimating $p$ from the training phase via the KT estimator. 
	This term decays as $\mathcal{O}(|\mathcal{I}|/(T\sqrt{|\mathcal{P}|}))$, and can be made negligible by allocating a sufficiently large training phase $|\mathcal{P}|$.
\end{itemize}

Compared with OCP and F-ROCP, AC-ROCP achieves strictly superior miscoverage performance: as $T \to \infty$, the error terms $H_1$ and $H_2$ both vanish, provided the training phase grows accordingly, yielding asymptotically vanishing miscoverage error even under corrupted feedback.

\section{Arbitrary Feedback Errors with Memory Bounds}\label{Sec_7}
While the previous section focused on a stochastic memoryless feedback error model, in this section we consider a complementary setting characterized by an arbitrary, potentially adversarial, corruption model with memory bounds.
We first describe the model and then specialize AC-ROCP to this setting, deriving explicit miscoverage bounds for OCP, F-ROCP and AC-ROCP.

\begin{figure}
	\centering
	\includegraphics[width=0.475\textwidth]{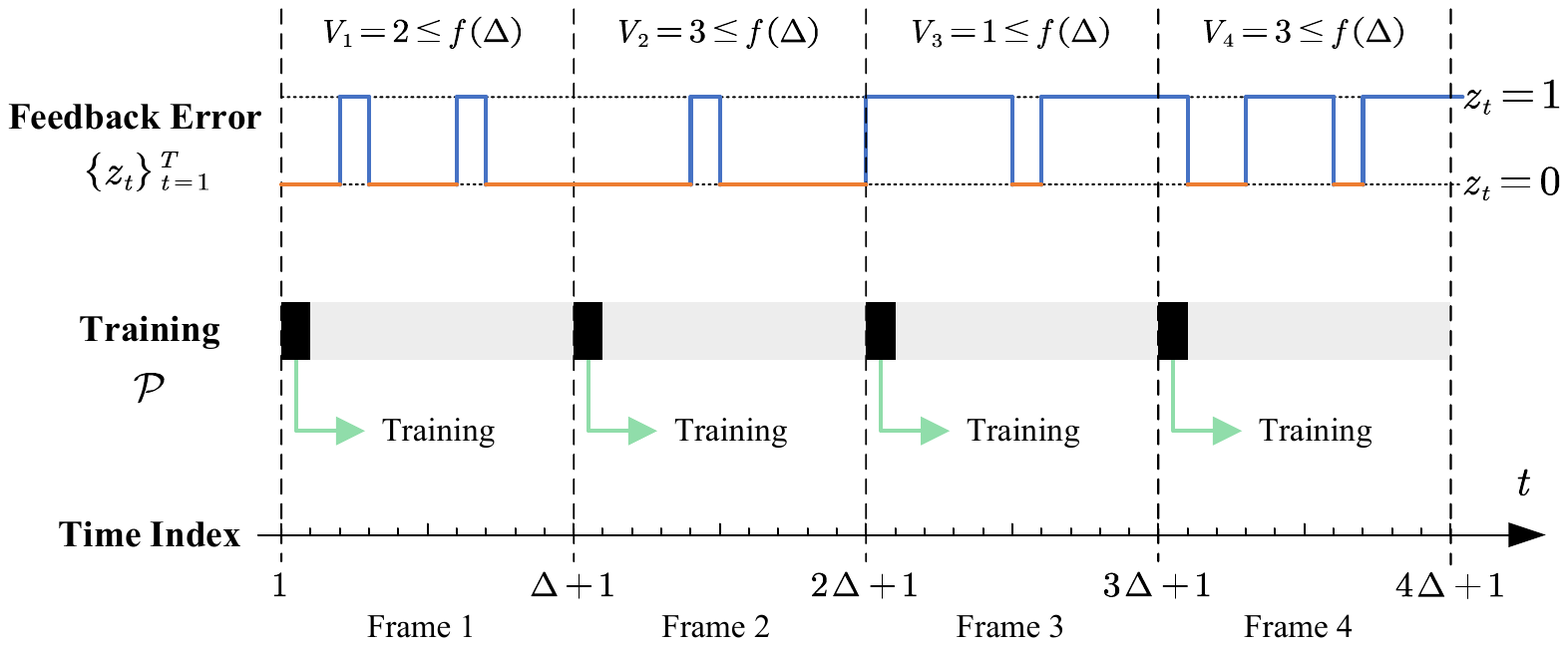}
	\caption{Illustration of the deterministic feedback error model with memory bounds shown in Sec. \ref{Sec_7}. The time horizon is partitioned into intervals of length $\Delta$, within which the feedback error indicator $z_t$ can switch at most $f(\Delta)$ times.}
	\label{fig_arbitary}
\end{figure}

\subsection{Feedback Error Model}
We consider an arbitrary, potentially adversarial, setting in which the frequency of corruption-state changes is bounded over time.
Specifically, as shown in Fig.~\ref{fig_arbitary}, the time horizon $\{1, 2, \ldots, T\}$ is partitioned into $\lceil T/\Delta \rceil$ consecutive, non-overlapping intervals of length $\Delta$. 
The parameter $\Delta$ governs the granularity of the memory structure.
Within each interval $\{(i-1)\Delta + 1, \ldots, i\Delta\}$, referred to as a \textit{frame}, the total variation of the feedback error indicator is constrained by a non-decreasing function $f(\cdot)$ with $f(0) = 0$.
Mathematically, we assume that the inequality
\begin{equation}\label{Neq_13}
    V_i = \sum_{t=(i-1)\Delta+1}^{i\Delta}
    \mathbbm{1}\{z_t \neq z_{(i-1)\Delta+1}\} \le f(\Delta) ,
\end{equation}
holds for every frame.

The constraint limits how many times the corruption state may change within any single interval. 
This error model thus bounds the memory of the corruption process by imposing that, within each block of length $\Delta$, the number of state transitions is at most $f(\Delta)$.

This model captures practically relevant disturbances such as sticky sensor faults, prolonged fading blocks, and energy-constrained adversaries whose attack budget per window is limited.

\subsection{Designing AC-ROCP for the Arbitrary Feedback Model with Memory Bounds}
In this setting, the feedback error indicator sequence $z_t$ may be chosen adversarially, subject only to the variation constraint \eqref{Neq_13} within each interval. 
The local regularity imposed by this constraint suggests a natural predictor: observe the corruption state at the beginning of each interval and hold it fixed throughout.
As we will discuss in the next subsection, this approach generalizes to other settings in which feedback errors have memory and thus can potentially be predicted.

Formally, as shown in Fig.~\ref{fig_arbitary}, we designate the first round of each interval as a probing step, yielding the set of training time steps
\begin{equation}\label{eq:probing_interval}
    \mathcal{P} = \{1,\, \Delta+1,\, 2\Delta+1,\, \ldots\}.
\end{equation}
For each interval $m = 1, 2, \ldots$, the predictor is then set as $q_t = z_{(m-1)\Delta+1}$.
Since the variation budget $f(\Delta)$ limits the number of corruption-state switches within any interval, the cumulative prediction error over each block is bounded directly by $f(\Delta)$, making this \textit{sense-and-hold} strategy both simple and analytically tractable.

\subsection{Performance Analysis}

We now characterize the miscoverage bounds for all three methods under this model. 
For compactness, define
\begin{equation}\label{Eq_Hat_F_Delta}
    \hat{f}(\Delta) = \max\{f(\Delta),\, \Delta - f(\Delta)\},
\end{equation}
which captures the worst-case imbalance between the corrupted and ideal rounds within any interval of length $\Delta$.

\begin{corollary}[OCP under Arbitrary Feedback Errors with Memory Bounds]
\label{Coro_3_2}
    Specializing Theorem~\ref{Theo_1} to this error model, the empirical miscoverage rate of OCP satisfies the upper bound
    \begin{equation}\label{Eq_Coro_3_2}
        \mathrm{MisCov}(T) \le \frac{B + \eta}{\eta T}
        + \frac{2 \hat{f}(\Delta)}{\Delta} + \frac{2 \hat{f}(\Delta)}{T} .
    \end{equation}
\end{corollary}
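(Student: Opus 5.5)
The plan is to specialize Theorem~\ref{Theo_1} by bounding its two corruption-dependent terms through the cumulative corruption level $G^T$, and then to count corrupted rounds frame by frame.

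\textbf{Step 1 (reduce to $G^T$).} Since $G_{0\to1}^T$ and $G_{1\to0}^T$ are nonnegative and $G_{0\to1}^T+G_{1\to0}^T=G^T$, we have $|G_{0\to1}^T-G_{1\to0}^T|\le G^T$. Using $\alpha,1-\alpha\le 1$, we also have $\max\{\alpha G_{1\to0}^T,(1-\alpha)G_{0\to1}^T\}\le G^T$. Substituting into \eqref{New_2_15} gives
\begin{equation*}
\mathrm{MisCov}(T)\le \frac{B+\eta}{\eta T}+\frac{2G^T}{T}.
\end{equation*}

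\textbf{Step 2 (frame-wise count).} I would split $\{1,\dots,T\}$ into the $\lceil T/\Delta\rceil$ frames of \eqref{Neq_13}. The aim is to bound the number of corrupted rounds in each frame by $\hat f(\Delta)$.
\begin{itemize}
\item If the frame starts with $z_{(i-1)\Delta+1}=0$, then every corrupted round is a round with $z_t\neq z_{(i-1)\Delta+1}$. Hence the count is at most $V_i\le f(\Delta)\le\hat f(\Delta)$.
\item If the frame starts with $z_{(i-1)\Delta+1}=1$, the memory bound only forces at least $\Delta-f(\Delta)$ corrupted rounds. It does not give an upper bound below $\Delta$.
\end{itemize}
Summing the per-frame bound would give $G^T\le\lceil T/\Delta\rceil\hat f(\Delta)\le (T/\Delta+1)\hat f(\Delta)$. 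This produces exactly the terms $2\hat f(\Delta)/\Delta+2\hat f(\Delta)/T$ in \eqref{Eq_Coro_3_2}.

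\textbf{Main obstacle.} The per-frame count in Step 2 is where the difficulty lies, because of the second case: a frame that begins corrupted can stay corrupted throughout. I would not rely on Step 2 alone. Instead I would close the argument with a direct observation that makes the corollary hold unconditionally.

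By definition, $\hat f(\Delta)=\max\{f(\Delta),\Delta-f(\Delta)\}\ge\Delta/2$, so $2\hat f(\Delta)/\Delta\ge 1$. The empirical miscoverage is the distance between $\alpha\in[0,1]$ and an average of indicators in $[0,1]$, so $\mathrm{MisCov}(T)\le\max\{\alpha,1-\alpha\}\le 1$. The right-hand side of \eqref{Eq_Coro_3_2} is therefore always at least $1$, and the inequality holds.

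I would present this as the primary argument. The Step 1–2 computation, carried out under the corruption count $\hat f(\Delta)$ per frame, would be given as the route that explains where the form of the bound comes from.
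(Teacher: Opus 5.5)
Your proof is correct, and your primary argument takes a different route from the paper's. The paper's proof is essentially your Step 1--2. It starts from \eqref{Eq_App_3_3}, i.e., $\mathrm{MisCov}(T)\le \frac{B+\eta}{\eta T}+(1+\max\{\alpha,1-\alpha\})\frac{1}{T}\sum_{t=1}^T z_t$. It then asserts that every frame contains at most $\hat f(\Delta)$ corrupted rounds, and finishes with $\lceil T/\Delta\rceil\le T/\Delta+1$ and $1+\max\{\alpha,1-\alpha\}\le 2$.

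The obstacle you flagged is exactly the weak point of that argument. A frame with $z_t=1$ throughout satisfies \eqref{Neq_13} with $V_i=0$, yet contains $\Delta$ corrupted rounds. This exceeds $\hat f(\Delta)$ whenever $0<f(\Delta)<\Delta$, so the per-frame count does not follow from the memory model. Done correctly, the counting route yields only a trivial bound.

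Your observation that $\hat f(\Delta)\ge \Delta/2$ closes this. It makes the right-hand side of \eqref{Eq_Coro_3_2} exceed $1\ge\max\{\alpha,1-\alpha\}\ge \mathrm{MisCov}(T)$. This gives a rigorous proof of the inequality as stated, with no use of Theorem~\ref{Theo_1} or of the frame structure at all.

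The price is that it shows the corollary is vacuous. The counting route becomes informative only under an extra assumption, such as each frame starting uncorrupted ($z_{(i-1)\Delta+1}=0$). In that case your first case gives at most $V_i\le f(\Delta)$ corruptions per frame, and the bound holds with $f(\Delta)$ in place of $\hat f(\Delta)$. Note also that your argument relies on the factor $2$. The F-ROCP analogue in Corollary~\ref{Coro_4_2} has a right-hand side guaranteed only to exceed $1/2$, so the same trick would not settle that statement.
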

\begin{IEEEproof}
    Please see supplementary material Appendix~\ref{App_Coro_3_2}.
\end{IEEEproof}

Corollary~\ref{Coro_3_2} shows that the miscoverage of OCP is governed by $\hat{f}(\Delta)/\Delta$, i.e., the normalized worst-case variation within a window. 
Critically, since the corruption-dependent term does not vanish as $T \to \infty$, the miscoverage error may persist indefinitely under adversarial corruption regardless of the sample size.

\begin{corollary}[F-ROCP under Arbitrary Feedback Errors with Memory Bounds]\label{Coro_4_2}
	Specializing Theorem~\ref{Theo_2} to this error model, the empirical miscoverage rate of F-ROCP satisfies
	\begin{equation}\label{eq_new_41}
		\mathrm{MisCov}(T) \le   \frac{ B + \eta}{\eta T} 
		+  \frac{\hat{f}(\Delta)}{\Delta} + \frac{\hat{f}(\Delta)}{T} ,
	\end{equation}
	where $\hat{f}(\Delta)$ is defined in \eqref{Eq_Hat_F_Delta}.
\end{corollary}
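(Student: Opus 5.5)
Starting from Theorem~\ref{Theo_2}, the only corruption-dependent quantity to control is the directional imbalance $|\tilde{G}_{0\to1}^{|\mathcal{I}|} - \tilde{G}_{1\to0}^{|\mathcal{I}|}|$ over the in-range indices. Using the triangle inequality, this is at most $\tilde{G}_{0\to1}^{|\mathcal{I}|} + \tilde{G}_{1\to0}^{|\mathcal{I}|} = |\{t\in\mathcal{I}: z_t=1\}| \le G^T$. Hence it suffices to show
\begin{equation*}
G^T \le \hat f(\Delta)\left\lceil T/\Delta\right\rceil ,
\end{equation*}
since then $G^T/T \le \hat f(\Delta)(T/\Delta+1)/T = \hat f(\Delta)/\Delta + \hat f(\Delta)/T$, which gives \eqref{eq_new_41} once combined with the first term $(B+\eta)/(\eta T)$ of Theorem~\ref{Theo_2}.

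The per-frame count comes from the memory constraint \eqref{Neq_13}. Take frame $i$ and let $z^\star = z_{(i-1)\Delta+1}$ be its first corruption state. There are two cases.

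\begin{itemize}
\item If $z^\star = 0$, the corrupted rounds in the frame are exactly those with $z_t \ne z^\star$. Their number is $V_i \le f(\Delta)$.
\item If $z^\star = 1$, the corrupted rounds are those with $z_t = z^\star$. Their number is $\Delta - V_i$.
\end{itemize}

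The second case is the main obstacle. The hypothesis only bounds $V_i$ from above, so $\Delta - V_i$ is not obviously at most $\Delta - f(\Delta)$. I would handle it by bounding $\Delta - V_i$ crudely by $\Delta$ only if needed. A cleaner route is to read $\hat f(\Delta) = \max\{f(\Delta), \Delta - f(\Delta)\}$ as the intended worst-case imbalance and apply it not to the raw count but to the \emph{difference} $\tilde G_{0\to1} - \tilde G_{1\to0}$ within the frame. That is, I would keep the signed imbalance, not pass to $G^T$, and argue that within a frame the number of rounds in the minority state is at most $\min\{V_i, \Delta - V_i\} \le f(\Delta)$ while the majority state contributes at most $\Delta - f(\Delta)$ net. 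Either way the frame contribution is at most $\hat f(\Delta)$. I would check this case against the convention in Appendix~\ref{App_Coro_3_2} for Corollary~\ref{Coro_3_2}, which presumably uses the same per-frame bound; there, OCP's two corruption terms each yield one copy of $\hat f(\Delta)(1/\Delta + 1/T)$, which explains the factor $2$.

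Finally, summing the per-frame bound over the $\lceil T/\Delta\rceil \le T/\Delta + 1$ frames, where the last frame may be partial and is bounded the same way, gives the displayed inequality. The key contrast with Corollary~\ref{Coro_3_2} is that Theorem~\ref{Theo_2} has no drift term $\max\{\alpha G_{1\to0}^T, (1-\alpha)G_{0\to1}^T\}/T$. This is why the constant halves from $2$ to $1$.
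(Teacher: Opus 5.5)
Your skeleton matches the paper's: the proof of this corollary is delegated to Appendix~\ref{App_Coro_3_2}, which also passes from the imbalance to the raw count $\sum_t z_t$, asserts a per-frame count of $\hat f(\Delta)$, and sums over $\lceil T/\Delta\rceil\le T/\Delta+1$ frames. As you say, the halved constant comes from the missing drift term. However, the step you flagged is a genuine gap, and neither of your remedies closes it.

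Within a frame, the number of corrupted rounds is $V_i$ if the frame starts uncorrupted and $\Delta-V_i$ if it starts corrupted. The constraint \eqref{Neq_13} only gives $V_i\le f(\Delta)$, so $V_i=0$ is allowed and the count can equal $\Delta>\hat f(\Delta)$ whenever $0<f(\Delta)<\Delta$.
\begin{itemize}
\item Bounding by $\Delta$ yields only the vacuous $\mathrm{MisCov}(T)\le\frac{B+\eta}{\eta T}+1+\frac{\Delta}{T}$.
\item The signed-imbalance route fails for two reasons. First, with $V_i=0$ the ``majority'' state occupies all $\Delta$ rounds, not $\Delta-f(\Delta)$. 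Second, the sign of each flip is set by $e_t$, i.e., by the score sequence, not by which corruption state is in the majority. An adversarial score sequence can therefore align every flip in the same direction, and nothing cancels.
\end{itemize}
The paper's appendix asserts $\sum_{\text{frame}} z_t\le\max\{f(\Delta),\Delta-f(\Delta)\}$ with no more justification than you had, so its proof has the same hole.

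In fact no argument can close the gap, because the statement is false as written. Take $z_t\equiv1$ (so $V_i=0$ in every frame), $f(x)=x/2$ (so $\hat f(\Delta)=\Delta/2$), $s_t\equiv B/2$, $r_1=0$, and $\eta(1-\alpha)\le B/2$.
\begin{itemize}
\item In an in-range round, $r_t<s_t$ gives $e_t=1$ and $\bar e_t=0$, so F-ROCP lowers $r_t$ by $\eta\alpha$. A round with $r_t<0$ is filtered and raises $r_t$ by $\eta(1-\alpha)$.
\item By induction $r_t\in[-\eta\alpha,\eta(1-\alpha))$, hence $r_t<s_t$ for every $t$. The empirical miscoverage is therefore $1$, and $\mathrm{MisCov}(T)=1-\alpha$.
\item The claimed bound is $\frac{B+\eta}{\eta T}+\frac12+\frac{\Delta}{2T}\to\frac12$, so for $\alpha<1/2$ it fails for large $T$.
\item Theorem~\ref{Theo_2} itself remains consistent here, since $\tilde G_{1\to0}^{|\mathcal{I}|}=|\mathcal{I}|=(1-\alpha)T+O(1)$.
\end{itemize}
The memory constraint limits how often $z_t$ changes, not how many rounds are corrupted. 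A valid specialization must therefore either charge $\Delta$ to every frame that starts corrupted, giving a bound in terms of the number of such frames, or strengthen the model. For example, under a per-frame budget $\sum_{\text{frame}} z_t\le f(\Delta)$ your argument goes through with $f(\Delta)$ in place of $\hat f(\Delta)$.
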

\begin{IEEEproof}
	It can be similarly derived as Corollary \ref{Coro_3_2} in supplementary material Appendix \ref{App_Coro_3_2}.
\end{IEEEproof}

Compared with Corollary~\ref{Coro_3_2}, the bound \eqref{eq_new_41} confirms that filtering attenuates the impact of bounded-memory corruption. 
Nevertheless, the leading term $\hat{f}(\Delta)/\Delta$ remains non-vanishing, indicating that F-ROCP cannot fully eliminate the miscoverage penalty under adversarial corruption.

\begin{corollary}[AC-ROCP under Arbitrary Feedback Errors with Memory Bounds]\label{Coro_5_2}
	Specializing Theorem~\ref{Theo_3} to this error model, the empirical miscoverage rate of AC-ROCP satisfies the upper bound
	\begin{equation}
		\begin{aligned}
			\mathrm{MisCov}(T) &  \le \frac{B + 3 \eta}{\eta T}
			+ \frac{f (\Delta)}{\Delta} + \frac{f(\Delta)}{T} .
		\end{aligned}
	\end{equation}
\end{corollary}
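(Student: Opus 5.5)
We specialize the relaxed bound \eqref{eq_28_b} of Theorem~\ref{Theo_3} to the sense-and-hold design. That theorem gives
\[
\mathrm{MisCov}(T) \le \frac{B+\eta(W+2)}{\eta T} + \frac{1}{T}\sum_{t\in\mathcal{I}} |z_t - q_t| ,
\]
so two things remain. First, we identify the compensation bound $W$. Second, we bound the cumulative prediction error $\sum_{t\in\mathcal{I}}|z_t-q_t|$ using the memory constraint \eqref{Neq_13}.

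\textbf{The constant $W$.} Here $q_t = z_{(m-1)\Delta+1}\in\{0,1\}$ and $2\bar e_t - 1\in\{-1,1\}$. Hence $w_t(q_t) = (2\bar e_t-1)q_t \in \{-1,0,1\}$, and we may take $W=1$. The first term then becomes $(B+3\eta)/(\eta T)$, which matches the statement.

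\textbf{The prediction error.} We enlarge the sum over $\mathcal{I}$ to a sum over all $t\in\{1,\dots,T\}$, since every summand is nonnegative. We then split this sum frame by frame. In frame $m$, covering rounds $(m-1)\Delta+1,\dots,m\Delta$, the predictor equals the probed state $z_{(m-1)\Delta+1}$. Because $z_t,q_t$ are binary, $|z_t - q_t| = \mathbbm{1}\{z_t \ne z_{(m-1)\Delta+1}\}$. The per-frame sum is therefore exactly $V_m$ in \eqref{Neq_13}, so it is at most $f(\Delta)$. The probing round itself contributes zero, and it is not a non-training in-range round anyway.

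\textbf{Summing over frames.} There are $\lceil T/\Delta\rceil \le T/\Delta + 1$ frames, the last possibly partial. For the partial frame, $f$ is non-decreasing and the variation over a shorter window is still at most the bound for the full window. Summing gives
\[
\sum_{t\in\mathcal{I}}|z_t-q_t| \le \Big(\frac{T}{\Delta}+1\Big) f(\Delta).
\]
Dividing by $T$ yields $f(\Delta)/\Delta + f(\Delta)/T$, and combining with the first term completes the bound.

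\textbf{Main obstacle.} The delicate step is the interaction between the active training rounds and the hidden process $h_t$. Theorem~\ref{Theo_3} is stated for AC-ROCP with training rounds included, so we assume its bound already absorbs the training-round coverage bookkeeping through \eqref{eq_30}; in particular, the ceiling-rounding effects there are folded into the constant $2$ in $W+2$. If that is not the case, the extra contribution would be $O(|\mathcal{P}|/T)$-type rounding. With one probe per frame this is $O(1/T)$ per frame-boundary effect, and should be absorbable into the stated terms. Apart from this point, the argument is a direct counting of switches.
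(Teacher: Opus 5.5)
Your proof is correct and follows the paper's argument. You take $W=1$ because $q_t\in\{0,1\}$, bound the in-range sum by the full sum, identify each frame's contribution with $V_m\le f(\Delta)$, and use $\lceil T/\Delta\rceil\le T/\Delta+1$.

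The hedge in your last paragraph is unnecessary. Theorem~\ref{Theo_3} is proved for the full protocol, with training rounds at arbitrary positions. Its proof absorbs them through a single rounding term $\alpha|\mathcal{P}|-\lfloor\alpha|\mathcal{P}|\rfloor<1$, which is the extra unit in $W+2$, so the term is not of order $|\mathcal{P}|/T$.
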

\begin{IEEEproof}
	Please see supplementary material Appendix \ref{App_Coro_5_2}.
\end{IEEEproof}

Corollary~\ref{Coro_5_2} reveals a qualitative improvement over the preceding two results. When the error variation bound $f(\Delta)$ grows sublinearly in $\Delta$, the dominant term $f(\Delta)/\Delta$ vanishes asymptotically, and thus AC-ROCP achieves miscoverage control that improves with $\Delta$ rather than saturating. 
Moreover, unlike OCP and F-ROCP, the bound depends on the term $f(\Delta)$ directly, rather than on the more conservative quantity $\hat{f}(\Delta)$, reflecting the gain from active compensation.

\section{Experiments}\label{Sec_8}
In this section, we evaluate the proposed methods under two corruption models: the independent stochastic error model studied in Sec.~\ref{Sec_6} and a Markov error model with memory, representing settings with memory, as investigated in Sec.~\ref{Sec_7}.

\subsection{Settings}

\subsubsection{Datasets}
We consider both classification and regression tasks, using two representative datasets.

\textbf{Classification task on CIFAR-100:}
We consider a multi-class classification task on the CIFAR-100 dataset \cite{krizhevsky2009learning}, which contains $60{,}000$ images from $100$ classes.
We initialize the base predictor with a pre-trained ResNet-18 model \cite{he2016deep} and further train it on the CIFAR-100 training set.
The CIFAR-100 test set, which contains 10,000 images, is reserved for online calibration under corrupted feedback, yielding a time horizon of $T=10{,}000$.

\textbf{Regression task on AVA:}
We also consider a regression task on the Aesthetic Visual Analysis (AVA) dataset \cite{murray2012ava}, which consists of images annotated with aesthetic ratings from 1 to 10.
Each rating is obtained by averaging the scores provided by annotators, and the goal is to predict the average aesthetic score of a given image.
We implement the regression model using a VGG16 network pre-trained on ImageNet, whose final fully connected layer is replaced by a single-output layer producing a continuous predicted score \cite{einbinder2024label}.
We use $34{,}000$ images to adapt the pre-trained model and reserve the remaining $15{,}557$ images for online calibration under corrupted feedback, corresponding to a horizon of $T = 15{,}557$ \cite{einbinder2024label}.

\subsubsection{Corruption Error Models}
We consider the following two corruption models.

\textbf{Independent stochastic error model:}
As described in Sec.~\ref{Sec_6}, in this model, the feedback error indicator $z_t$ follows an i.i.d. Bernoulli process with corruption probability $p = \Pr \{ z_t = 1\}$.

\textbf{Markov error model with memory:}
In this model, the feedback error indicator $z_t$ evolves according to a two-state Markov chain \cite{levin2017markov}, with transition probabilities
\begin{equation}
	\begin{aligned}
		p_{01} = \Pr(z_{t+1}=1 \mid z_t=0), \text{ and }
		p_{10} = \Pr(z_{t+1}=0 \mid z_t=1).
	\end{aligned}
\end{equation}
The corresponding stationary distribution is
\begin{equation}
	\begin{aligned}
		\pi_0 = \frac{p_{10}}{p_{01}+p_{10}}, \text{ and }
		\pi_1 = \frac{p_{01}}{p_{01}+p_{10}},
	\end{aligned}
\end{equation}
where $\pi_1$ denotes the stationary corruption probability.
In all simulations, the Markov chain is initialized from its stationary distribution.
Following standard Markov chain theory \cite{levin2017markov}, the memory length (also referred to as the relaxation time) is defined as
\begin{equation}
	\mathcal{M} = \frac{1}{p_{01} + p_{10}} ,
\end{equation}
which quantifies the temporal dependence of the corruption process.
When $\mathcal{M}$ is small, the corruption process evolves rapidly and exhibits weak memory, whereas larger values of $\mathcal{M}$ correspond to slower evolution and stronger memory.

\subsection{Independent Stochastic Error}

In this subsection, we evaluate the proposed methods under the independent stochastic error model on the CIFAR-100 classification task.

We adopt the score function $S(X_t, y) = 1 - [f(X_t)]_y$, where $[f(X_t)]_y$ denotes the softmax probability assigned by the pre-trained model to class $y$ for input $X_t$ at time step $t$ \cite{angelopoulos2024online}.
The target miscoverage level is set to $\alpha = 0.1$.
Following Sec.~\ref{Sec_6}, training steps are scheduled according to \eqref{eq:training_rounds} with $|\mathcal{P}| = 50$.

For AC-ROCP, we consider two implementations.
The first is the KT-based scheme labeled as ``AC-ROCP, KT'', which estimates the corruption probability $\hat{p}$ from the training data, introduced in Sec.~\ref{Sec_6}.
The second is an oracle variant (denoted by ``AC-ROCP, Oracle''), which has direct access to the true corruption probability $p$.
Comparing these two variants allows us to quantify the gap between the practical estimator-based scheme and its idealized counterpart.
The OCP with ideal feedback, referred to as ``OCP, Ideal'', is also included as a benchmark, serving as an upper bound on performance.

\subsubsection{Coverage and Set Size over Time}
In Fig.~\ref{Fig_0b}, we report the average coverage and average set size of the considered methods as functions of the time horizon $T$, where the shaded regions denote the standard deviation across $10^4$ independent trials.

In Fig.~\ref{Fig_0b} left, conventional OCP under corrupted feedback becomes increasingly conservative over time, with the coverage rising well above the target level $1-\alpha$ and stabilizing close to one.
F-ROCP alleviates this effect and yields substantially improved coverage, although it still exhibits noticeable over-coverage.
By contrast, both variants of AC-ROCP track the target coverage closely; in particular, the oracle variant nearly overlaps with the ideal feedback benchmark throughout the horizon.
The KT-based variant exhibits an initial full-coverage regime with negligible deviation across trials, because of the deterministic active training phase.
Once training completes, the KT variant converges to the target level and aligns with the oracle variant, indicating that the estimate $\hat{p}$ rapidly becomes accurate enough to match the oracle's calibration behavior.
Across all methods, the shaded regions narrow as $T$ grows, reflecting the expected concentration of the empirical coverage around its mean as more samples accumulate.

Fig.~\ref{Fig_0b} right shows the corresponding average set size.
The corrupted feedback OCP baseline produces excessively large prediction sets, which approach the full label space.
Since CIFAR-100 contains only $100$ classes, this behavior indicates extreme conservativeness.
F-ROCP significantly reduces the set size, but it remains more conservative than the ideal feedback benchmark.
The proposed AC-ROCP achieves a more favorable trade-off, maintaining coverage close to the target while producing substantially smaller sets than OCP and F-ROCP.
We also observe that the KT-based variant yields slightly larger sets than the oracle version during the initial stage, again due to the early active training mechanism.

Overall, these results highlight the advantage of active compensation under stochastic i.i.d. corrupted feedback.
Unlike OCP and F-ROCP, which remain conservative throughout the horizon, AC-ROCP is able to maintain near-target coverage while keeping the prediction sets comparatively informative.

\begin{figure}[t]
	\centering
	\subfigure[]{\label{Fig_Exp1_2_a}
		\includegraphics[width=0.4\textwidth]{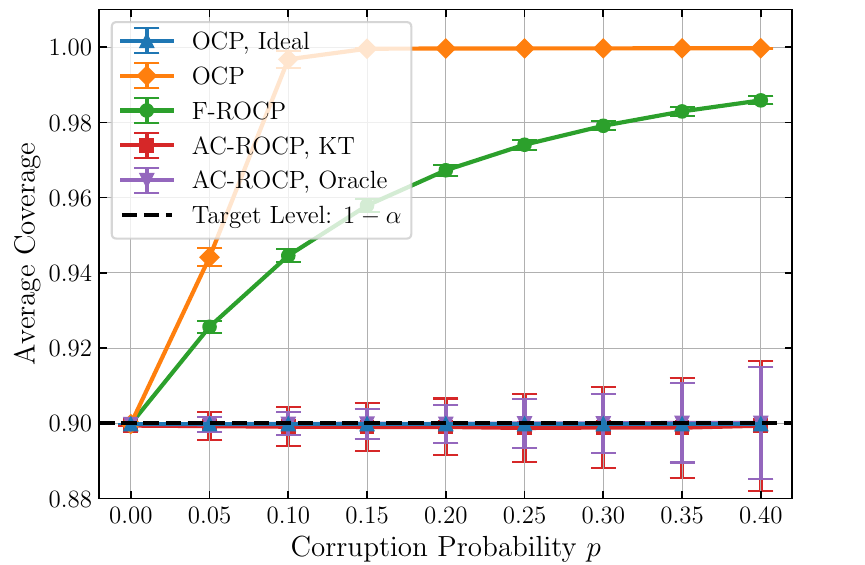}
	}
	\subfigure[]{\label{Fig_Exp1_2_b}
		\includegraphics[width=0.4\textwidth]{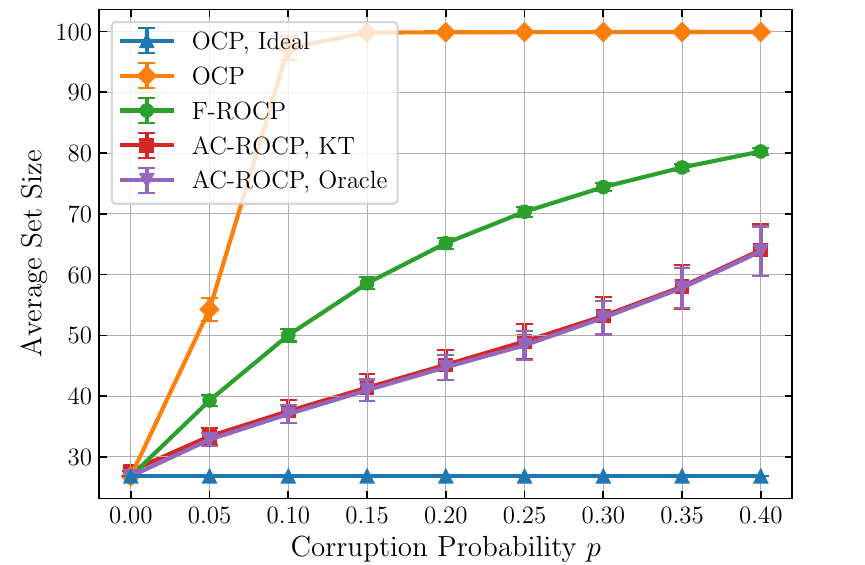}
	}
	\caption{Effect of the corruption probability $p$ on the performance of the different methods with $T = 10{,}000$ and $\alpha = 0.1$. 
	Results are averaged over $10^4$ independent trials, with the shaded region denoting a two-sided interval with half-width equal to one standard deviation.
	(a) Coverage versus $p$. (b) Average set size versus $p$.}
	\label{Fig_Exp1_2}
\end{figure}

\subsubsection{Impact of the Corruption Level}
In Fig.~\ref{Fig_Exp1_2}, we study the effect of the corruption probability $p$ on the average coverage and average set size of the considered methods, where the error bars denote the standard deviation across $10^4$ trials.

As shown in Fig.~\ref{Fig_Exp1_2_a}, the average coverage of OCP rises far above the target level and eventually approaches one as $p$ increases. 
This behavior is expected: a larger corruption probability corresponds to more frequent feedback flips, and as $p$ approaches $0.5$, the feedback becomes maximally corrupted and therefore substantially less informative for calibration. 
F-ROCP mitigates this degradation and yields lower coverage than OCP across all values of $p$, but still exhibits a clear upward drift as the corruption level increases. 
By contrast, both AC-ROCP variants remain tightly concentrated around the target level $1-\alpha$ across the entire range of $p$, indicating that the active compensation mechanism is both effective and reliable even under severe corruption. 
Nevertheless, as $p$ increases, the standard deviation of both AC-ROCP variants grows moderately, which is expected since a higher corruption probability renders the feedback less informative and hence leads to increased variability in the threshold updates.

The same trend is reflected in Fig.~\ref{Fig_Exp1_2_b}, which reports the average set size. 
As $p$ increases, all methods operating under corrupted feedback require larger prediction sets in order to maintain coverage. 
This increase is most pronounced for OCP, whose set size rapidly approaches the entire label space. 
F-ROCP improves upon OCP but still incurs substantial growth in set size as the corruption level rises. 
In comparison, both AC-ROCP variants achieve significantly smaller prediction sets across all values of $p$, with AC-ROCP KT closely matching the oracle variant, suggesting that the KT predictor reliably approximates the oracle corruption estimate. 
The growing error bars of both AC-ROCP variants at higher $p$ are consistent with the coverage observations in Fig.~\ref{Fig_Exp1_2_a}, reflecting the increased variability caused by less informative feedback.

Taken together, Fig.~\ref{Fig_Exp1_2} shows that the proposed method is considerably more robust to increasing corruption than the two corrupted-feedback baselines.

\subsection{Markov Error with Memory}

In this subsection, we evaluate the proposed methods under the Markov feedback error model with memory bounds on the AVA regression task.
We adopt the absolute error as the non-conformity score, namely, $S(X_t, y) = |f_t(X_t) - y|$, where $f_t(X_t)$ denotes the predicted score at time $t$.
The target miscoverage level is set to $\alpha = 0.1$.
Training steps are scheduled according to \eqref{eq:probing_interval}, and the interval between two consecutive training steps is referred to as the \emph{inter-training interval}.
In all simulations, we set $p_{01} = p_{10}$, which yields a stationary corruption rate $\pi_1 = 0.5$ and memory length $\mathcal{M} = 1/(2p_{01})$.

\begin{figure}[t]
	\centering
	\subfigure[]{\label{Fig_Exp2_1_a}
		\includegraphics[width=0.4\textwidth]{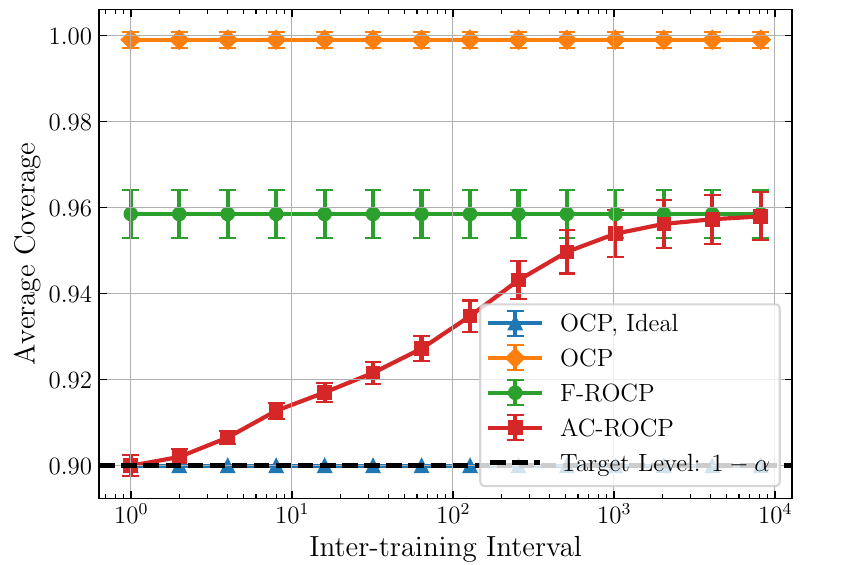}
	}
	\subfigure[]{\label{Fig_Exp2_1_b}
		\includegraphics[width=0.4\textwidth]{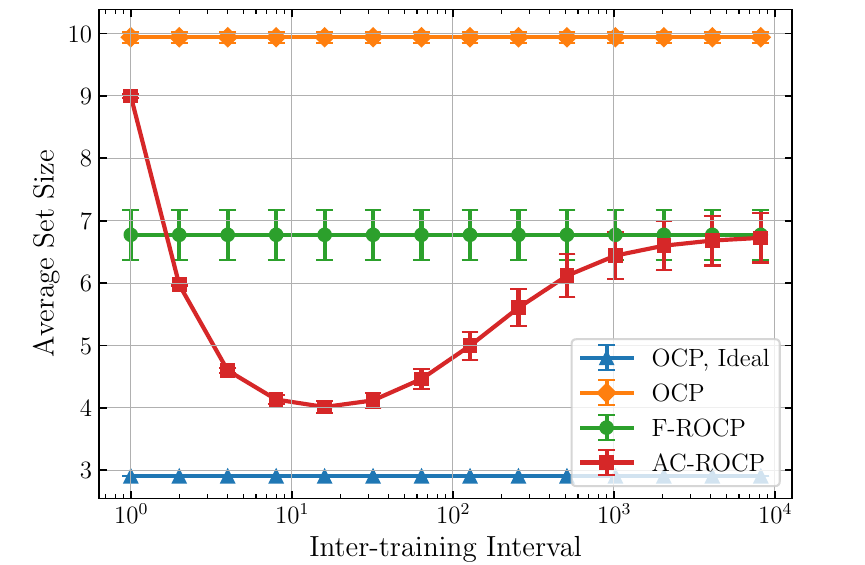}
	}
	\caption{Effect of the inter-training interval on the performance of the different methods with memory length $\mathcal{M} = 100$ and $\alpha = 0.1$. 
	Results are averaged over $10^3$ independent trials, with the shaded region denoting a two-sided interval with half-width equal to one standard deviation.
	(a) Long-run coverage versus inter-training interval. 
	(b) Average set size versus inter-training interval.}
	\label{Fig_Exp2_1}
\end{figure}

\subsubsection{Impact of the Inter-Training Interval}
In Fig.~\ref{Fig_Exp2_1}, we evaluate the effect of the inter-training interval on the long-run coverage and average set size of the considered methods for memory length $\mathcal{M}=100$, where the error bars denote the standard deviation across $10^3$ trials.
As shown in Fig.~\ref{Fig_Exp2_1_a}, the performance of OCP and F-ROCP is essentially unchanged across inter-training intervals, since neither method uses the inserted training steps to learn a predictor.
In contrast, AC-ROCP is sensitive to this design parameter.
When the inter-training interval is small, AC-ROCP achieves long-run coverage close to the target level.
As the inter-training interval increases, the coverage of AC-ROCP gradually drifts above the target, indicating increasingly conservative behavior, while its error bars also become larger.
This is because when the inter-training interval becomes too large relative to the memory length $\mathcal{M}$, the learned predictor becomes increasingly stale and less accurate in tracking the current corruption state.
Consequently, both the average compensation bias and the variability of the compensation error across independent trials increase.

Fig.~\ref{Fig_Exp2_1_b} reports the corresponding average set size.
The set sizes of OCP and F-ROCP remain nearly constant over all inter-training intervals.
In contrast, AC-ROCP exhibits a clear U-shaped trend. 
When the inter-training interval is small, the training stage is invoked frequently and tends to produce larger prediction sets. 
As the interval increases, the training stage becomes less frequent, and the active compensation based updates can still effectively calibrate the predictions, leading to a smaller average set size.
However, once the inter-training interval becomes too large, the information used by the active compensation mechanism becomes stale.
Consequently, the updates become less effective in correcting the corruption, and the average set size increases again.
We also observe that the error bars of AC-ROCP become larger as the inter-training interval increases, which is consistent with the coverage observations in Fig.~\ref{Fig_Exp2_1_a}.

Overall, Fig.~\ref{Fig_Exp2_1} illustrates the trade-off induced by the inter-training interval in AC-ROCP.
Smaller intervals improve tracking of the corruption process but incur greater training overhead, whereas larger intervals reduce training overhead at the cost of degraded calibration and larger prediction sets.

\begin{figure}[t]
	\centering
	\subfigure[]{\label{Fig_Exp2_2_a}
		\includegraphics[width=0.4\textwidth]{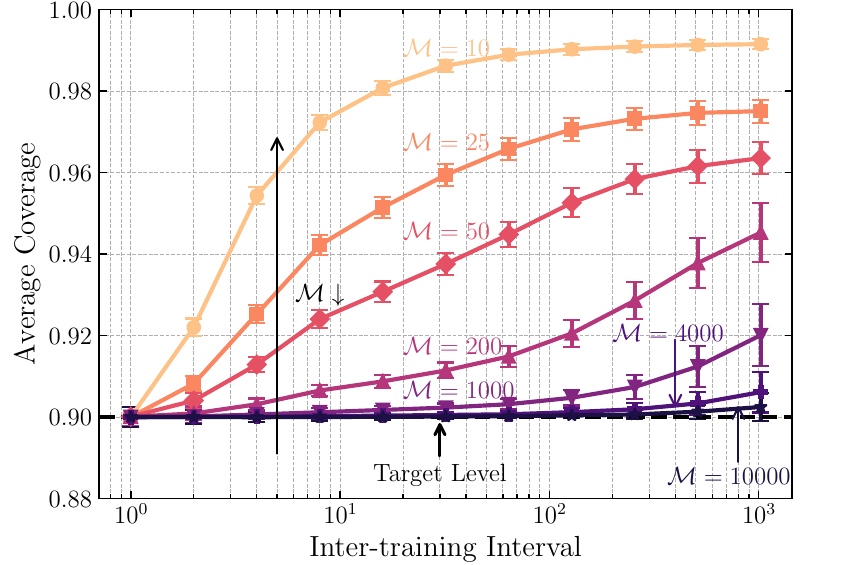}
	}
	\subfigure[]{\label{Fig_Exp2_2_b}
		\includegraphics[width=0.4\textwidth]{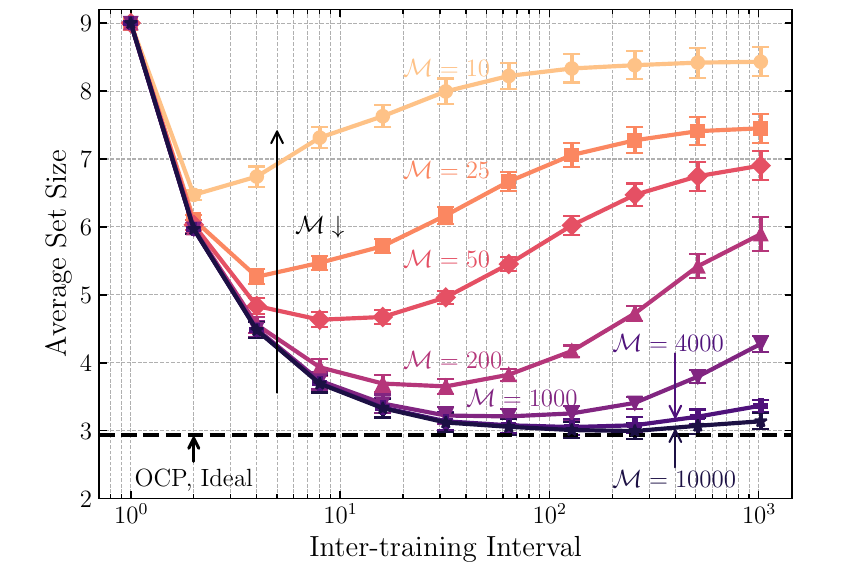}
	}
	\caption{Effect of the memory length $\mathcal{M}$ on the performance of the different methods with $\alpha = 0.1$ and $T = 15,557$. 
	Results are averaged over $10^3$ independent trials, with the shaded region denoting a two-sided interval with half-width equal to one standard deviation.
	(a) Coverage versus inter-training interval. 
	(b) Average set size versus inter-training interval.}
	\label{Fig_Exp2_2}
\end{figure}

\subsubsection{Impact of the Memory Length}
In Fig.~\ref{Fig_Exp2_2}, we study the effect of the memory length $\mathcal{M}$ on the average coverage and average set size of the considered methods, where the error bars denote the standard deviation across $10^3$ trials.

The main observation is that longer memory makes the corruption process easier to track.
As shown in Fig.~\ref{Fig_Exp2_2_a}, when the inter-training interval is very small, the average coverage remains close to the target level for all values of $\mathcal{M}$.
As the inter-training interval increases, the coverage gradually drifts above the target, indicating increasingly conservative behavior.
This drift is more pronounced for shorter memory lengths, whereas larger values of $\mathcal{M}$ allow the coverage to remain closer to the target over a wider range of inter-training intervals.
This behavior is expected, since a longer-memory corruption process evolves more slowly and can therefore be tracked more reliably even when training is performed less frequently.

Fig.~\ref{Fig_Exp2_2_b} reports the corresponding average set size.
For all memory lengths, a similar U-shaped trend is observed. 
This suggests the existence of an approximately optimal inter-training interval that provides a favorable trade-off between calibration accuracy and prediction efficiency.
Moreover, as the memory length increases, the minimizer of the average set size shifts to the right. This indicates that a longer-memory corruption process allows for a larger inter-training interval.
In addition, as the memory length increases, the minimum average set size approaches that of OCP under ideal feedback.

Taken together, Figs.~\ref{Fig_Exp2_2_a} and \ref{Fig_Exp2_2_b} demonstrate that longer memory allows less frequent training while still maintaining coverage close to the target level and relatively small, hence more informative, prediction sets.

\section{Conclusion}\label{Sec_9}

We have studied online conformal prediction (OCP) \cite{gibbs2021adaptive,gibbs2024conformal,bhatnagar2023improved,zecchin2024localized,wu2025error,xu2023conformal} under corrupted feedback and shown that the calibration guarantees of standard OCP can break down when the feedback used for threshold updates is corrupted. 
To mitigate this issue, we have proposed two robust schemes, F-ROCP and AC-ROCP, and established explicit empirical miscoverage guarantees.
We have further specialized these guarantees under stochastic corruption model and bounded-memory corruption model.
Experiments on real-world datasets demonstrated that the proposed methods achieve substantially improved calibration and smaller prediction sets than baseline OCP under corrupted feedback. 
Future work may consider robust OCP under corrupted non-conformity score feedback, more general adaptive risk control with corrupted feedback, and multi-expert enhanced OCP under corrupted feedback.

\footnotesize
\bibliographystyle{IEEEtran}
\bibliography{IEEEabrv,./ref.bib}

\newpage
\normalsize
\appendices
\input{Appendix_OCP.tex}

\end{document}

%% file: Appendix_OCP.tex
\setcounter{page}{1}
\begin{figure*}
	\begin{center}
		{\Huge Online Conformal Prediction with Corrupted \\\vspace{0.3em}Feedback}
		
		\vspace{1em}
		{\large Bowen Wang, \textit{Graduate Student Member, IEEE}, Matteo Zecchin, \textit{Member, IEEE},\\ 
			\vspace{0.2em}
			and Osvaldo Simeone, \textit{Fellow, IEEE}.}
		
		\vspace{1em}
		{\large \textit{(Supplementary Material)}}
	\end{center}
\end{figure*}

\section{Proof of Theorem \ref{Theo_1}}\label{App_Theo_1}
\setcounter{equation}{0}
\renewcommand*{\theequation}{A-\arabic{equation}}

We first prove the lemma below that shows the boundedness of the threshold $r_t$.
Lemma \ref{Lem_3_1} is essential in the proof of Theorem \ref{Theo_1}.
\begin{lemma}\label{Lem_3_1}
	Under Assumption~\ref{Assumption_1}, the threshold $r_t$ satisfies, for all $t$,
	\begin{equation}
		-\eta\alpha \left(1 + G_{1 \to 0}^T\right) \;\le\; r_t \;\le\; B + \eta(1 - \alpha)\left(1 + G_{0 \to 1}^T\right).
	\end{equation}
\end{lemma}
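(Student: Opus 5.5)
The plan is a direct ``excursion'' argument on the corrupted recursion \eqref{OCP_Noisy}, $r_{t+1}=r_t-\eta\bar g_t$, with $\bar g_t\in\{\alpha,\alpha-1\}$. I will assume, as in the standard analysis behind Lemma~\ref{Lem_OCP_Clean}, that the initialization satisfies $r_1\in[0,B]$. The statement is not true for an arbitrary $r_1$, so I will state this assumption explicitly in the proof. The key structural fact comes from Assumption~\ref{Assumption_1} and \eqref{eq_err}:
\begin{itemize}
\item if $r_t\ge B$, then $s_t\le B\le r_t$, so $e_t=0$ and $g_t=\alpha$;
\item if $r_t<0$, then $s_t\ge 0>r_t$, so $e_t=1$ and $g_t=\alpha-1$.
\end{itemize}
Hence, outside $[0,B)$ the true gradient is pinned, and any ``wrong-direction'' step there must be a corrupted round of a specific type.

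For the upper bound, fix $t$. If $r_t<B$ there is nothing to prove, so assume $r_t\ge B$. Let $\tau<t$ be the last index with $r_\tau<B$. If no such $\tau$ exists, then $r_1\ge B$, which forces $r_1=B$; I will treat that case with the same argument, starting the count at $s=1$.
\begin{enumerate}
\item Since each step has magnitude at most $\eta(1-\alpha)$ upward, $r_{\tau+1}\le r_\tau+\eta(1-\alpha)<B+\eta(1-\alpha)$.
\item For every $s\in\{\tau+1,\dots,t-1\}$ we have $r_s\ge B$, so $g_s=\alpha$. The increment $r_{s+1}-r_s$ is therefore either $-\eta\alpha\le 0$ (uncorrupted round) or $+\eta(1-\alpha)$. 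The latter happens exactly when $\bar g_s=\alpha-1\neq g_s$, i.e., on a $0\to1$ flip.
\item Telescoping gives $r_t\le B+\eta(1-\alpha)+\eta(1-\alpha)\,N$, where $N$ is the number of $0\to1$ flips inside the excursion. Since $N\le G^T_{0\to1}$, this yields the claimed upper bound.
\end{enumerate}

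The lower bound is symmetric. Take $r_t<0$ and let $\tau$ be the last time with $r_\tau\ge 0$, which exists because $r_1\ge0$.
\begin{enumerate}
\item The largest downward step is $\eta\alpha$, so $r_{\tau+1}\ge -\eta\alpha$.
\item Throughout the excursion $r_s<0$, so $g_s=\alpha-1$. Each increment is either $+\eta(1-\alpha)\ge0$ or $-\eta\alpha$, and the latter occurs only on a $1\to0$ flip.
\item Summing gives $r_t\ge-\eta\alpha(1+G^T_{1\to0})$.
\end{enumerate}

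There is no real obstacle here; the main care point is bookkeeping at the excursion boundaries. The entry step into the out-of-range region must be bounded separately, since it is this step that produces the ``$1+$'' in each bound rather than an extra flip. Beyond that, I need to state explicitly the initialization requirement $r_1\in[0,B]$, and check that the boundary case $r_t=B$ is correctly placed in the $e_t=0$ regime, which holds because of the non-strict inequality in \eqref{Eq_Pre_Set}.
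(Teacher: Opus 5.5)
Your proposal is correct and follows the paper's proof. Like the paper, you bound the entry step into the out-of-range region by $\eta(1-\alpha)$ (resp.\ $\eta\alpha$). You then note that inside the excursion the true gradient is pinned, so only $0\to1$ (resp.\ $1\to0$) flips can push $r_t$ further out, and you sum these increments. The differences are cosmetic: the paper assumes $r_1=0$ and defines the excursion with $r_t>B$, whereas you allow $r_1\in[0,B]$, use $r_t\ge B$, and explicitly handle the case where no entry time $\tau$ exists.
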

\begin{IEEEproof}
	We establish the upper bound; the lower bound follows by a symmetric argument. If $r_t \le B$, the bound holds trivially, so we consider the case $r_t > B$.
	
	Let $\tau < t$ be the most recent time index at which $r_\tau \le B$ and $r_{\tau+1} > B$; such a $\tau$ exists since $r_1 = 0 \le B$. By the update rule \eqref{OCP_Noisy}, the largest single-step increase from $r_\tau \in [0, B]$ is realized when $\bar{g}_\tau = \alpha - 1$, yielding
	\begin{equation}
		r_{\tau+1} \;=\; r_\tau - \eta(\alpha - 1) \;\le\; B + \eta(1 - \alpha).
	\end{equation}
	
	For every subsequent step $j \in (\tau, t]$, the definition of $\tau$ ensures $r_j > B \ge s_j$, so the true feedback error indicator satisfies $e_j = 0$ and the true gradient is $g_j = \alpha$. 
	A ideal update thus shrinks $r_j$ by $\eta\alpha$; only a corrupted flip $\bar{g}_j = \alpha - 1$ can increase it, and each such flip contributes $\eta (1 - \alpha)$ increment, so $j \in (\tau, t]$ contains at most $G_{0 \to 1}^T$ flips. 
	Summing the increments gives
	\begin{equation}
		\begin{aligned}
			r_t & \le r_{\tau+1} + \eta(1 - \alpha)\, G_{0 \to 1}^T \\
		& \le B + \eta(1 - \alpha)(1 + G_{0 \to 1}^T) .
		\end{aligned}
	\end{equation}
	
	The lower bound follows by the symmetric argument.
\end{IEEEproof}

We now prove Theorem \ref{Theo_1}.
Unrolling the recursion in~\eqref{OCP_Noisy}, the threshold at time $T+1$ can be decomposed as
\begin{equation}\label{Eq_App_2_1}
	\begin{aligned}
		r_{T+1} & = r_{1} - \eta \sum_{t=1}^{T} \bar{g}_t \\
		& = r_{1} - \eta \sum_{t=1}^{T} {g}_t + \eta \sum_{t \in \bar{\mathcal{G}}} \left( g_t - \bar{g}_t \right) ,
	\end{aligned}
\end{equation}
where $\bar{\mathcal{G}}$ is defined in \eqref{Eq_11}, denoting the set of time indices where the feedback is flipped.

Rearranging \eqref{Eq_App_2_1} and dividing by $\eta T$ yields
\begin{equation}
	\begin{aligned}
		\frac{1}{T} \sum_{t=1}^{T} {g}_t & = \frac{r_1 - r_{T+1}}{\eta T} + \frac{\sum_{t \in \bar{\mathcal{G}}} \left( g_t - \bar{g}_t \right)}{T} \\
		& \mathop = \limits^{\text{(a)}} \frac{r_1 - r_{T+1}}{\eta T} + \frac{G_{0 \to 1}^T - G_{1 \to 0}^T}{T} ,
	\end{aligned}
\end{equation}
where $G_{0 \to 1}^T = | \{ t \in \bar{\mathcal{G}} : g_t = \alpha, \bar{g}_t = \alpha - 1 \} |$ and $G_{1 \to 0}^T = | \{ t \in \bar{\mathcal{G}} : g_t = \alpha - 1, \bar{g}_t = \alpha \} |$ represent the flips of gradient from $\alpha$ to $\alpha - 1$ and from $\alpha - 1$ to $\alpha$, respectively.
The equality (a) holds because the difference $g_t - \bar{g}_t$ is $-1$ when $g_t \to \bar{g}_t$ is flipped from $\alpha - 1$ to $\alpha$, and $1$ when flipped from $\alpha$ to $\alpha - 1$.

Noting $g_t = \alpha - \mathbbm{1} \{ Y_t \notin \mathcal{C}_t(X_t) \}$, applying Lemma \ref{Lem_3_1} and taking absolute values, we have
\begin{equation}\label{Eq_App_2_4}
	\begin{aligned}
		\mathrm{MisCov}(T) = & \left| \alpha - \frac{1}{T} \sum_{t=1}^{T} \mathbbm{1}\{ Y_t \notin \mathcal{C}_t(X_t) \}  \right| \\
		\le & \frac{ B + \eta}{\eta T} + 
		\underbrace{ \frac{ | G_{0\to1}^T - G_{1\to0}^T | }{T} }_{\mathtt{term}_1}\\
		& + \underbrace{ \frac{\max\{ \alpha G_{1\to0}^T , (1 - \alpha) G_{0\to1}^T \}}{T} }_{\mathtt{term}_2} .
	\end{aligned}
\end{equation}
Here, $\mathtt{term}_1$ and $\mathtt{term}_2$ are defined for use in Appendix \ref{App_Coro_3_1}.

\section{Proof of Theorem \ref{Theo_3}}\label{App_Pro:wt_2}
\setcounter{equation}{0}
\renewcommand*{\theequation}{B-\arabic{equation}}

Similar to the proof of Theorem \ref{Theo_1}, we first show the boundedness of hidden variables $\{h_t\}$ under AC-ROCP.
\begin{lemma}\label{lem:2}
	Under Assumption \ref{Assumption_1}, with a bounded compensation term $w_t(q_t) \in [-W, W]$, the threshold $h_t$ produced by \eqref{Neq_31} satisfies, for all $t$,
	\begin{equation}
		-\eta ( W + 1 ) \le h_t \le B + \eta ( W + 1 ) .
	\end{equation} 
\end{lemma}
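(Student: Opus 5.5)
The plan is to prove both inequalities simultaneously by induction on $t$, using only the update rule \eqref{Neq_31}. We do not track the number of flips, as in Lemma~\ref{Lem_3_1}. The key observation is that the filtering branch always pushes an out-of-range hidden threshold back toward $[0,B)$. Only an in-range step can move $h_t$ outward, and such a step has magnitude at most $\eta(1+W)$. For the base case we take the initialization $h_1\in[0,B]$ (e.g., $h_1=r_1=0$ as in the proof of Lemma~\ref{Lem_3_1}), which trivially lies in $[-\eta(W+1),\,B+\eta(W+1)]$.

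For the inductive step, assume $-\eta(W+1)\le h_t\le B+\eta(W+1)$ and split into the four cases of \eqref{Neq_31}.
\begin{enumerate}
\item If $t\in\mathcal{P}$, then $h_{t+1}=h_t$ and there is nothing to prove.
\item If $t\notin\mathcal{P}$ and $0\le h_t<B$, we bound the compensated gradient. Since $\bar g_t=\alpha-\bar e_t\in\{\alpha,\alpha-1\}$, we have $|\bar g_t|\le\max\{\alpha,1-\alpha\}\le 1$. Together with $|w_t(q_t)|\le W$, this gives $|\bar g_t+w_t(q_t)|\le 1+W$. Hence $h_{t+1}\in[h_t-\eta(1+W),\,h_t+\eta(1+W)]\subseteq[-\eta(W+1),\,B+\eta(W+1)]$, using $h_t\ge 0$ for the lower end and $h_t<B$ for the upper end.
\item If $t\notin\mathcal{P}$ and $h_t\ge B$, then $h_{t+1}=h_t-\eta\alpha$. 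The upper bound is inherited from $h_t$, and the lower bound follows from $h_{t+1}\ge B-\eta\alpha\ge-\eta\ge-\eta(W+1)$ because $B>0$.
\item If $t\notin\mathcal{P}$ and $h_t<0$, then $h_{t+1}=h_t+\eta(1-\alpha)$. The lower bound is inherited from $h_t$, and the upper bound follows from $h_{t+1}<\eta(1-\alpha)\le\eta\le B+\eta(W+1)$.
\end{enumerate}

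Each case preserves the interval, which closes the induction. The only place where any care is needed is the in-range case. There one must use that the step starts inside $[0,B)$, so a single overshoot of size at most $\eta(1+W)$ is the worst that can happen. The out-of-range branches never push further outward, which is exactly why the filtering removes the flip-count dependence present in Lemma~\ref{Lem_3_1}. A cruder bound such as $|\bar g_t|\le 1$ suffices and matches the stated constant, so I do not expect a genuine obstacle.
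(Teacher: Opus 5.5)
Your proof is correct and follows the same route as the paper: an induction on $t$ with an invariant interval, split into the training case and the three branches of \eqref{Neq_31}. The difference is cosmetic. The paper carries the slightly tighter invariant $[-\eta(\alpha+W),\,B+\eta(1-\alpha+W)]$ and relaxes it at the end via $\max\{\alpha,1-\alpha\}\le 1$, whereas you work with the stated interval directly, and your case split matches the branches of \eqref{Neq_31} exactly, including placing $h_t=0$ in the in-range case.
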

\begin{IEEEproof}
We define
\begin{equation}\label{eq_invariant}
	\mathcal{S} = \bigl[-\eta(\alpha + W),\ B + \eta(1 - \alpha + W)\bigr] .
\end{equation}
The base case $h_1 \in [0 , B] \subset \mathcal{S}$ is immediate. 
For the inductive step, assume $h_t\in\mathcal{S}$. If $t\in\mathcal{P}$, then
\eqref{Neq_31} gives $h_{t+1}=h_t$, and hence $h_{t+1}\in\mathcal{S}$.
It remains to consider the case $t\notin\mathcal{P}$. We divide it into three sub-cases.

\begin{itemize}
	\item $h_t \in (0, B)$. By the filter design, the update $h_{t+1} = h_t - \eta(\tilde{g}_t + w_t)$ applies with $\tilde{g}_t \in \{\alpha, \alpha - 1\}$ and $w_t \in [-W, W]$. Hence
	\begin{equation}
		h_{t+1} \in \bigl[h_t - \eta(\alpha + W),\ h_t + \eta(1 - \alpha + W)\bigr] \subset \mathcal{S}.\nonumber
	\end{equation}
	
	\item $h_t \in [B,\ B + \eta(1 - \alpha + W)]$. Since $h_t \ge B \ge s_t$, the ideal gradient is $g_t = \alpha$; the filter mechanism infers $g_t = \alpha$ and sets $w_t = 0$. Hence
	\begin{equation}
		h_{t+1} = h_t - \eta\alpha \in \bigl[B - \eta\alpha,\ B + \eta(1 - 2\alpha + W)\bigr] \subset \mathcal{S}. \nonumber
	\end{equation}
	
	\item $h_t \in [-\eta(\alpha + W),\ 0]$. Since $h_t \le 0 \le s_t$, the ideal gradient is $g_t = \alpha - 1$; the filter mechanism infers $g_t = \alpha - 1$ and sets $w_t = 0$. Hence
	\begin{equation}
		h_{t+1} = h_t + \eta(1 - \alpha) \in \bigl[\eta(1 - 2\alpha - W),\ \eta(1 - \alpha)\bigr] \subset \mathcal{S}.\nonumber
	\end{equation}
\end{itemize}
By noting $\max \{ \alpha , 1 - \alpha \} \le 1$, we finish this proof.
\end{IEEEproof}

We now prove Theorem~\ref{Theo_3}. 
The proof separates the contributions of training steps $t \in \mathcal{P}$ and non-training steps $t \in \bar{\mathcal{P}}$, and bounds the residual compensation error on the in-range subset $\mathcal{I} \subseteq \bar{\mathcal{P}}$.

\emph{Step 1: Training-step.} 
By the training protocol \eqref{eq_30}, each $t \in \mathcal{P}$ uses either $r_t = 0$ or $r_t = B$. 
This forces $e_t = 1$ when $r_t = 0$ and $e_t = 0$ when $r_t = B$. 
The protocol allocates $\lfloor \alpha|\mathcal{P}|\rfloor$ training steps to $r_t = 0$ and the remaining $\lceil (1-\alpha)|\mathcal{P}|\rceil$ to $r_t = B$, so
\begin{equation}
	\begin{aligned}
		& \sum_{t \in \mathcal{P}} \mathbbm{1} \{ Y_t \notin \mathcal{C}_t(X_t; r_t) \} \\
		& =
		\lceil (1 - \alpha) |\mathcal{P}| \rceil \times 0 + \lfloor \alpha |\mathcal{P}| \rfloor \times 1 \\
		& =
		\lfloor \alpha |\mathcal{P}| \rfloor ,
	\end{aligned}
\end{equation}
and therefore
\begin{equation}\label{Eq_App_Pro_5_1_2}
	\begin{aligned}
		\sum_{t \in \mathcal{P}} g_t 
		& = 
		\sum_{t \in \mathcal{P}} \left( \alpha - \mathbbm{1} \{ Y_t \notin \mathcal{C}_t(X_t; r_t) \} \right) \\
		& = 
		\alpha |\mathcal{P}| - \lfloor \alpha |\mathcal{P}| \rfloor \in [0 , 1).
	\end{aligned}
\end{equation}

\emph{Step 2: Non-training-step.} 
Unrolling \eqref{Neq_31} over the time index $t=1,\ldots,T$, and using $h_{t+1}=h_t$ for $t\in\mathcal{P}$, gives
\begin{equation}
	\begin{aligned}
		h_{T+1} = & h_{1} - \eta \sum_{t\in\bar{\mathcal{P}}} (\bar{g}_t + w_t(q_t)) \\
		\mathop = \limits^{\text{(a)}} & h_{1} - \eta \sum_{t\in\bar{\mathcal{P}}} {g}_t + \eta \sum_{t \in{\mathcal{I}}} \left( g_t - \bar{g}_t - w_t(q_t)\right) \\
		\mathop = \limits^{\text{(b)}} &  h_{1} - \eta \sum_{t=1}^T {g}_t + \eta \sum_{t \in \mathcal{I}} \left( g_t - \bar{g}_t  - w_t(q_t)\right) \\
		& + \eta (\alpha |\mathcal{P}| - \lfloor \alpha |\mathcal{P}| \rfloor) ,
	\end{aligned}
\end{equation}
where (a) adds and subtracts $\eta \sum_{t \in \bar{\mathcal{P}}} g_t$ and uses $w_t(q_t) = 0$ for all $ t \notin \mathcal{I}$, and (b) substitutes~\eqref{Eq_App_Pro_5_1_2} together with $\sum_{t \in \bar{\mathcal{P}}} g_t = \sum_{t=1}^T g_t - \sum_{t \in \mathcal{P}} g_t$.

\emph{Step 3: Deterministic bound.} 
Rearranging, taking absolute values and dividing by $\eta T$, we obtain
\begin{subequations}\label{Eq_App_Pro_5_1_4}
	\begin{align}
		\left| \frac{1}{T} \sum_{t=1}^T {g}_t \right| \le & \frac{\left|h_1 - h_{T+1}\right|}{\eta T} 
		+ \left|\frac{\alpha |\mathcal{P}| - \lfloor \alpha |\mathcal{P}| \rfloor}{T}\right| \nonumber \\
		& + \left|\frac{\sum_{t \in {\mathcal{I}}} \left( g_t - \bar{g}_t  - w_t(q_t) \right)}{T}\right| \\
		\mathop \le \limits^{(\text{a})} & \frac{B + \eta (W + 1)}{\eta T}  + \frac{1}{T} \nonumber \\
		&+ \left|\frac{\sum_{t \in {\mathcal{I}}} \left( g_t - \bar{g}_t  - w_t(q_t) \right)}{T}\right| \\
		\le & \frac{B + (W + 2)\eta}{\eta T} \nonumber \\
		& + \left|\frac{\sum_{t \in {\mathcal{I}}} \left( g_t - \bar{g}_t  - w_t(q_t) \right)}{T}\right| ,
	\end{align}
\end{subequations}
where (a) holds by Lemma \ref{lem:2} together with $\alpha |\mathcal{P}| - \lfloor \alpha |\mathcal{P}| \rfloor \le 1$.

\emph{Step 4: Compensation residual.} 
Using $g_t - \bar{g}_t = \bar{e}_t - e_t = (2\bar{e}_t - 1)z_t$ (from $g_t = \alpha - e_t$, $\bar{g}_t = \alpha - \bar{e}_t$), and the active compensation $w_t(q_t) = (2\bar{e}_t - 1) q_t$ from \eqref{eq_29}, we obtain
\begin{equation}
	\begin{aligned}
		\left|\frac{\sum_{t \in {\mathcal{I}}} \left( g_t - \bar{g}_t  - w_t(q_t) \right)}{T}\right|
		& = \frac{1}{T} \left| \sum_{t \in \mathcal{I}}  \left( 2 \bar{e}_t - 1 \right) \left(z_t - q_t\right)  \right| \\
		& \le \frac{1}{T} \sum_{t \in \mathcal{I}}  \left| 2 \bar{e}_t - 1  \right| \left| z_t - q_t \right| \\
		& \mathop \le \limits^{\text{(a)}} \frac{1}{T} \sum_{t \in \mathcal{I}} \left| z_t - q_t \right| ,
	\end{aligned}
\end{equation}
where (a) holds since $|2 \bar{e}_t - 1| = 1$.

\section{Proof of Corollary \ref{Coro_3_1}}\label{App_Coro_3_1}
\setcounter{equation}{0}
\renewcommand*{\theequation}{C-\arabic{equation}}

For $\mathtt{term}_1$, we recall that
\begin{eqnarray}
	G^T = G_{0\to1}^T +G_{1\to0}^T = \sum_{t=1}^T z_t.
\end{eqnarray}
Hence, $\mathtt{term}_1$ can be bounded by
\begin{equation}\label{Eq_App_3_1}
	\begin{aligned}
		\mathtt{term}_1
		& = 
		\frac{|G_{0\to1}^T-G_{1\to0}^T|}{T} \\
		& \le
		\frac{G_{0\to1}^T+G_{1\to0}^T}{T}
		=
		\frac{1}{T}\sum_{t=1}^T z_t.
	\end{aligned}
\end{equation}

For $\mathtt{term}_2$, since $G_{1\to0}^T\le G_{0\to1}^T+G_{1\to0}^T$ and $G_{0\to1}^T\le G_{0\to1}^T+G_{1\to0}^T$, we have
\begin{equation}
	\begin{aligned}
		& \max\{\alpha G_{1\to0}^T,(1-\alpha)G_{0\to1}^T\} \\
		& \qquad \qquad \le
		\max\{\alpha,1-\alpha\}(G_{0\to1}^T+G_{1\to0}^T).
	\end{aligned}
\end{equation}
Thus, it holds that
\begin{equation}\label{Eq_App_3_2}
	\begin{aligned}
		\mathtt{term}_2
		& =
		\frac{\max\{\alpha G_{1\to0}^T,(1-\alpha)G_{0\to1}^T\}}{T} \\
		& \le
		\max\{\alpha,1-\alpha\}\frac{1}{T}\sum_{t=1}^T z_t.
	\end{aligned}
\end{equation}

Combining \eqref{Eq_App_3_1} and \eqref{Eq_App_3_2} and Theorem \ref{Theo_1}, we have
\begin{equation}\label{Eq_App_3_3}
	\begin{aligned}
		& \left| \alpha - \frac{1}{T} \sum_{t=1}^{T} \mathbbm{1}\{ Y_t \notin \mathcal{C}_t(X_t) \}  \right|
		\le \frac{ B + \eta}{\eta T} \\
		& \qquad \qquad \qquad + \left(1 + \max\{ \alpha , 1-\alpha \}\right) \frac{1}{T}\sum_{t=1}^T z_t .
	\end{aligned}
\end{equation}

Under $z_t \overset{\mathrm{i.i.d.}}{\sim} \mathrm{Bern}(p)$, by directly applying Hoeffding's inequality, for any $\delta\in(0,1)$, with probability at least $1-\delta$, it holds that
\begin{equation}
	\frac{1}{T}\sum_{t=1}^T z_t
	\le
	p+\sqrt{\frac{\log(1/\delta)}{2T}}.
\end{equation}

Therefore, with probability at least $1-\delta$, it holds that
\begin{equation}
	\begin{aligned}
		& \left| \alpha - \frac{1}{T} \sum_{t=1}^{T} \mathbbm{1}\{ Y_t \notin \mathcal{C}_t(X_t) \}  \right|
		\le \frac{ B }{\eta T} + \frac{ 1 }{T} \\
		& \qquad + \left(1 + \max\{ \alpha , 1-\alpha \}\right) \left( p + \sqrt{\frac{\log(1/\delta)}{2T}} \right) .
	\end{aligned}
\end{equation}
By noting $1 + \max\{ \alpha , 1-\alpha \} \le 2$, we finish the proof.

\section{Proof of Corollary \ref{Coro_5_1}}\label{App_Coro_5_1}
\setcounter{equation}{0}
\renewcommand*{\theequation}{D-\arabic{equation}}

For notational convenience, we define $\mathcal{W}(x) = {x} / {(2 x - 1)}$ with $x \in [ 0 , \gamma]$, with $0 < \gamma < 0.5$.
We begin by decomposing the bound into two parts as follows
\begin{equation}
	\begin{aligned}
		& \frac{1}{T} \left| \sum_{t \in \mathcal{I}} \left( 2 \bar{e}_t - 1 \right) \left(z_t - q_t \right)  \right| \\
		= & \frac{1}{T} \left| \sum_{t \in \mathcal{I}} \left( 2 \bar{e}_t - 1 \right) \left(z_t - \mathcal{W}(\hat{p}) \right)  \right| \\
		= & \frac{1}{T} \left| \sum_{t \in \mathcal{I}} \left( 2 \bar{e}_t - 1 \right) \left(z_t - \mathcal{W}(p) + \mathcal{W}(p) - \mathcal{W}(\hat{p}) \right) \right| \\
		\le & \underbrace{ \frac{1}{T} \left| \sum_{t \in \mathcal{I}} \left( 2 \bar{e}_t - 1 \right) \left(z_t - \mathcal{W}(p) \right) \right| }_{\text{Part (a): Oracle Error}} \\
		& + \underbrace{ \frac{1}{T} \left| \sum_{t \in \mathcal{I}} \left( 2 \bar{e}_t - 1 \right) \left( \mathcal{W}(p) - \mathcal{W}(\hat{p}) \right) \right|  }_{\text{Part (b): Estimation Error}}   .
	\end{aligned}
\end{equation} 

Part (a) represents the oracle error, and Part (b) accounts for the estimation performance.
In the following, we derive the bounds for these two components separately.

\emph{Bounding Part (a):} 
Let $\mathcal{F}_{t-1} \!\!=\!\! \sigma(z_1 \!,\!\ldots \!,\! z_{t-1} \!,\!e_1 \!,\!\ldots \!,\!e_t)$. By construction, $e_t$ and the event $\{t \in \mathcal{I}\}$ are $\mathcal{F}_{t-1}$-measurable. Define
\begin{equation}
	N_t \;:=\; \mathbbm{1}\{t \in \mathcal{I}\} \cdot (2\bar{e}_t - 1)(z_t - \mathcal{W}(p)).
\end{equation}
Using $\bar{e}_t = e_t \oplus z_t$ and $z_t \sim \mathrm{Bern}(p)$ independent of $\mathcal{F}_{t-1}$, a direct calculation gives $\mathbb{E}[N_t \mid \mathcal{F}_{t-1}] = 0$, so $\{N_t\}$ is a martingale difference sequence. Moreover, $N_t$ takes values in $\{\mathcal{W}(p), 1 - \mathcal{W}(p)\}$ when $e_t = 0$ and in $\{-\mathcal{W}(p), \mathcal{W}(p) - 1\}$ when $e_t = 1$ (and is $0$ outside $\mathcal{I}$), so the range of $N_t$ is at most
\begin{equation}
	1 - 2\mathcal{W}(p) \;=\; \frac{1}{1 - 2p}.
\end{equation}
By the Azuma–Hoeffding inequality, with probability at least $1 - \delta_a$,
\begin{equation}
	\text{Part (a)} = \frac{1}{T}\left|\sum_{t \in \mathcal{I}} N_t\right| 
	\le \frac{1}{(1 - 2p) T}\sqrt{\frac{|\mathcal{I}| \log(2/\delta_a)}{2}}.
\end{equation}

\emph{Bounding Part (b):} Since $|2e_t-1| = 1$,  the triangle inequality gives 
\begin{equation}
	\begin{aligned}
		\text{Part (b)}
		& \le \frac{1}{T} \sum_{t \in \mathcal{I}} \left| \left( 2 \bar{e}_t - 1 \right) \right| \left| \left(\mathcal{W}(p) - \mathcal{W}(\hat{p}) \right) \right| \\
		& \le \frac{|\mathcal{I}|}{T} \left| \mathcal{W}(p) - \mathcal{W}(\hat{p}) \right| .
	\end{aligned}
\end{equation}
Since $\mathcal{W}'(x) = -1/(1 - 2x)^2$, the function $\mathcal{W}$ is $L$-Lipschitz on interval $[0, \max\{p, \hat{p}\}]$ with
\begin{equation}
	L = \frac{1}{(1 - 2\max\{p, \hat{p}\})^2} .
\end{equation}
Thus, Part (b) is further bounded by
\begin{equation}\label{Eq_App_7_1}
	\begin{aligned}
		\text{Part (b)} \le \frac{|\mathcal{I}| \left| p - \hat{p} \right|}{(1 - 2\max\{p , \hat{p}\})^2 T} .
	\end{aligned}
\end{equation}

It remains to bound term $| p - \hat{p} |$ in \eqref{Eq_App_7_1}, which measures the KT estimation error.
By plugging the KT estimator \eqref{eq_training_KT} into the term $| p - \hat{p} |$, we have
\begin{equation}
	\begin{aligned}
		| p - \hat{p} | 
		& =
		\left| p - \mathrm{Proj}_{[0, \gamma]}\left[ \frac{\sum_{t \in \mathcal{P}} z_t + 0.5}{|\mathcal{P}| + 1} \right] \right| \\
		& \le 
		\left| p - \frac{\sum_{t \in \mathcal{P}} z_t + 0.5}{|\mathcal{P}| + 1} \right| \\
		& \le
		\left| \frac{\sum_{t \in \mathcal{P}} z_t - |\mathcal{P}|p}{|\mathcal{P}| + 1}  \right| + \frac{1}{2(|\mathcal{P}| + 1)} .
	\end{aligned}
\end{equation}
Since $z_t$ follows i.i.d. $\mathrm{Bern}(p)$, Hoeffding's inequality gives, with probability at least $1 - \delta_b$,
\begin{equation}
	\left| \frac{\sum_{t \in \mathcal{P}} z_t - |\mathcal{P}|p}{|\mathcal{P}| + 1}  \right|
	\le
	\sqrt{\frac{\log(2/\delta_b)}{2 (|\mathcal{P}|  + 1)}} .
\end{equation}
Therefore, with probability at least $1-\delta_b$, it holds that
\begin{equation}
	| p - \hat{p} | 
	\le
	\sqrt{\frac{\log(2/\delta_b)}{2 (|\mathcal{P}|  + 1)}} + \frac{1}{2(|\mathcal{P}| + 1)}.
\end{equation}

\emph{Combining:} Setting $\delta_a = \delta_b = \delta/2$, we have that, with probability at least $1-\delta$, it holds that
\begin{equation}
	\begin{aligned}
		& \left| \alpha - \frac{1}{T} \sum_{t=1}^{T} \mathbbm{1}\{ Y_t \notin \mathcal{C}_t(X_t) \}  \right| \\ 
		&  \le \frac{B}{\eta T} + \frac{2}{T}
		+ \frac{1}{(1-2p)T} \sqrt{\frac{|\mathcal{I}| \log( 4/ \delta )}{2}}\\
		& \quad + \frac{|\mathcal{I}|}{(1-2\max\{p , \hat{p}\})^2 T} \left( \sqrt{\frac{\log(4/\delta)}{2(|\mathcal{P}| + 1)}} + \frac{1}{2(|\mathcal{P}| + 1)} \right) .
	\end{aligned}
\end{equation}

\section{Proof of Corollary \ref{Coro_3_2}}\label{App_Coro_3_2}	
\setcounter{equation}{0}
\renewcommand*{\theequation}{E-\arabic{equation}}

Based on the varriation of the corrupted indicator defined in \eqref{Neq_13}, we can partition the time horizon $T$ into $\lceil T / \Delta \rceil$ intervals, each of length at most $\Delta$.
In each interval, the corrupted indicator $z_t$ can only flip at most $f(\Delta)$ times, which implies that 
\begin{equation}
	\sum_{i=0}^{\Delta} z_{t+i} \le \max\{ f(\Delta) , \Delta - f(\Delta) \} = \hat{f}(\Delta).
\end{equation}

Hence, based on \eqref{Eq_App_3_3}, we have
\begin{equation}
	\begin{aligned}
		& \left| \alpha - \frac{1}{T} \sum_{t=1}^{T} \mathbbm{1}\{ Y_t \notin \mathcal{C}_t(X_t) \}  \right| \\
		& \le \frac{ B }{\eta T} + \frac{ 1 }{T} + 
		\left(1 + \max\{ \alpha , 1-\alpha \}\right) \frac{\lceil T / \Delta \rceil}{T} \hat{f}(\Delta) \\
		& \mathop \le \limits^{\text{(a)}} \frac{ B }{\eta T} + \frac{ 1 }{T} + 
		2 \hat{f}(\Delta) \left( \frac{1}{\Delta} + \frac{1}{T}\right) ,
	\end{aligned}
\end{equation}
where (a) holds since $\lceil T / \Delta \rceil \le T / \Delta + 1$ and $1 + \max\{ \alpha , 1-\alpha \} \le 2$.

\section{Proof of Corollary \ref{Coro_5_2}}\label{App_Coro_5_2}
\setcounter{equation}{0}
\renewcommand*{\theequation}{F-\arabic{equation}}

This proof will be based on \eqref{eq_28_b}.
Since we insert training point periodically, and hold out for subsequent $\Delta - 1$ steps, we have 
\begin{equation}
	\frac{1}{T} \sum_{t \in \mathcal{I}} \left| z_t - q_t \right|
	= \frac{1}{T} \sum_{i=0}^{\lceil T / \Delta \rceil - 1} \sum_{t = i \Delta + 1}^{\min\{ (i+1) \Delta , T \}} \left| z_t - q_t \right| .
\end{equation}

Recall that we assume the variation of the corruption indicator $z_t$ satisfies
\begin{equation}
	V(t , \Delta) = \sum_{i=0}^{\Delta} \mathbbm{1} \{ {z}_{t} \ne z_{t+i} \}
	\le f( \Delta ) .
\end{equation}

Thus, we have
\begin{equation}
	\begin{aligned}
		\frac{1}{T} \sum_{i=0}^{\lceil T / \Delta \rceil - 1} \sum_{t = i \Delta + 1}^{\min\{ (i+1) \Delta , T \}} \left| z_t - q_t \right|
		& \le
		\frac{1}{T} \sum_{i=0}^{\lceil T / \Delta \rceil - 1} f(\Delta) \\
		& \le
		\left( \frac{T}{\Delta} + 1 \right) \frac{f(\Delta)}{\Delta}.
	\end{aligned}
\end{equation}

Combining this with \eqref{eq_28_b} and noting $W = |(2 \bar{e}_t - 1) z_t| = 1$, we have
\begin{equation}
	\left| \alpha - \frac{1}{T} \sum_{t=1}^{T} \mathbbm{1}\{ Y_t \notin \mathcal{C}_t(X_t) \}  \right| 
	\le 
	\frac{B}{\eta T} + \frac{2}{T} + 
	\frac{f(\Delta)}{T} + \frac{f(\Delta)}{\Delta} .
\end{equation}

%% file: Main_ALL.bbl
\begin{thebibliography}{10}
\providecommand{\url}[1]{#1}
\csname url@samestyle\endcsname
\providecommand{\newblock}{\relax}
\providecommand{\bibinfo}[2]{#2}
\providecommand{\BIBentrySTDinterwordspacing}{\spaceskip=0pt\relax}
\providecommand{\BIBentryALTinterwordstretchfactor}{4}
\providecommand{\BIBentryALTinterwordspacing}{\spaceskip=\fontdimen2\font plus
\BIBentryALTinterwordstretchfactor\fontdimen3\font minus
  \fontdimen4\font\relax}
\providecommand{\BIBforeignlanguage}[2]{{%
\expandafter\ifx\csname l@#1\endcsname\relax
\typeout{** WARNING: IEEEtran.bst: No hyphenation pattern has been}%
\typeout{** loaded for the language `#1'. Using the pattern for}%
\typeout{** the default language instead.}%
\else
\language=\csname l@#1\endcsname
\fi
#2}}
\providecommand{\BIBdecl}{\relax}
\BIBdecl

\bibitem{gawlikowski2023survey}
J.~Gawlikowski \emph{et~al.}, ``A survey of uncertainty in deep neural
  networks,'' \emph{Artif. Intell. Rev.}, vol.~56, no. Suppl 1, pp. 1513--1589,
  2023.

\bibitem{abdar2021review}
M.~Abdar \emph{et~al.}, ``A review of uncertainty quantification in deep
  learning: Techniques, applications and challenges,'' \emph{Inf. Fusion},
  vol.~76, pp. 243--297, 2021.

\bibitem{Bingcong2019TSP}
B.~Li, T.~Chen, and G.~B. Giannakis, ``Secure mobile edge computing in {IoT}
  via collaborative online learning,'' \emph{{IEEE} Trans. Signal Process.},
  vol.~67, no.~23, pp. 5922--5935, 2019.

\bibitem{Chlaily2023TSP}
S.~Chlaily, D.~Ratha, P.~Lozou, and A.~Marinoni, ``On measures of uncertainty
  in classification,'' \emph{{IEEE} Trans. Signal Process.}, vol.~71, pp.
  3710--3725, 2023.

\bibitem{Nir2025TSP}
Y.~Dahan, G.~Revach, J.~Dunik, and N.~Shlezinger, ``Bayesian {KalmanNet}:
  Quantifying uncertainty in deep learning augmented {Kalman} filter,''
  \emph{{IEEE} Trans. Signal Process.}, vol.~73, pp. 2558--2573, 2025.

\bibitem{Dua2026ICASSP}
S.~Dua, G.~Mateos, and S.~P. Chepuri, ``Conformal inference for time series
  over graphs,'' in \emph{Proc. IEEE Int. Conf. Acoust., Speech, Signal
  Process.}, 2026, pp. 181--185.

\bibitem{simeone2025uncertainty}
O.~Simeone and Y.~Romano, ``Uncertainty-aware data-efficient {AI}: An
  information-theoretic perspective,'' \emph{arXiv preprint arXiv:2512.05267},
  2025.

\bibitem{shorinwa2025survey}
O.~Shorinwa, Z.~Mei, J.~Lidard, A.~Z. Ren, and A.~Majumdar, ``A survey on
  uncertainty quantification of large language models: Taxonomy, open research
  challenges, and future directions,'' \emph{ACM Comput. Surv.}, vol.~58,
  no.~3, pp. 1--38, 2025.

\bibitem{ankile2024juicer}
L.~Ankile, A.~Simeonov, I.~Shenfeld, and P.~Agrawal, ``Juicer: Data-efficient
  imitation learning for robotic assembly,'' in \emph{Proc. IEEE/RSJ Int. Conf.
  Intell. Robots Syst.}\hskip 1em plus 0.5em minus 0.4em\relax IEEE, 2024, pp.
  5096--5103.

\bibitem{Ji2024TSP}
Z.~Ji, C.~Chen, and X.~Guan, ``Observability guaranteed distributed intelligent
  sensing for industrial cyber-physical system,'' \emph{{IEEE} Trans. Signal
  Process.}, vol.~72, pp. 5198--5212, 2024.

\bibitem{zecchin2026prediction}
M.~Zecchin, U.~K. Ganesan, G.~Durisi, P.~Popovski, and O.~Simeone,
  ``Prediction-powered communication with distortion guarantees,'' \emph{IEEE
  J. Sel. Areas Inf. Theory}, 2026.

\bibitem{zhu2025conformal}
M.~Zhu, M.~Zecchin, S.~Park, C.~Guo, C.~Feng, P.~Popovski, and O.~Simeone,
  ``Conformal distributed remote inference in sensor networks under reliability
  and communication constraints,'' \emph{{IEEE} Trans. Signal Process.}, 2025.

\bibitem{simeone2025conformal}
O.~Simeone, S.~Park, and M.~Zecchin, ``Conformal calibration: Ensuring the
  reliability of black-box {AI} in wireless systems,'' \emph{{IEEE} Commun.
  Mag.}, pp. 1--9, 2025.

\bibitem{angelopoulos2020uncertainty}
A.~Angelopoulos, S.~Bates, J.~Malik, and M.~I. Jordan, ``Uncertainty sets for
  image classifiers using conformal prediction,'' \emph{arXiv preprint
  arXiv:2009.14193}, 2020.

\bibitem{chernozhukov2018exact}
V.~Chernozhukov, K.~W{\"u}thrich, and Z.~Yinchu, ``Exact and robust conformal
  inference methods for predictive machine learning with dependent data,'' in
  \emph{Proc. Conf. Learn. Theory}.\hskip 1em plus 0.5em minus 0.4em\relax
  PMLR, Jul. 2018, pp. 732--749.

\bibitem{gibbs2021adaptive}
I.~Gibbs and E.~Candes, ``Adaptive conformal inference under distribution
  shift,'' in \emph{Proc. Adv. Neural Inf. Process. Syst.}, vol.~34, Dec. 2021,
  pp. 1660--1672.

\bibitem{gibbs2024conformal}
I.~Gibbs and E.~J. Cand{\`e}s, ``Conformal inference for online prediction with
  arbitrary distribution shifts,'' \emph{J. Mach. Learn. Res.}, vol.~25, no.
  162, pp. 1--36, 2024.

\bibitem{bhatnagar2023improved}
A.~Bhatnagar, H.~Wang, C.~Xiong, and Y.~Bai, ``Improved online conformal
  prediction via strongly adaptive online learning,'' in \emph{Proc. Int. Conf.
  Mach. Learn.}\hskip 1em plus 0.5em minus 0.4em\relax PMLR, Jul. 2023, pp.
  2337--2363.

\bibitem{zecchin2024localized}
M.~Zecchin and O.~Simeone, ``Localized adaptive risk control,'' in \emph{Proc.
  Adv. Neural Inf. Process. Syst.}, vol.~37, Dec. 2024, pp. 8165--8192.

\bibitem{wu2025error}
J.~Wu, D.~Hu, Y.~Bao, S.-T. Xia, and C.~Zou, ``Error-quantified conformal
  inference for time series,'' in \emph{Int. Conf. Learn. Represent.}, Apr.
  2025.

\bibitem{xu2023conformal}
C.~Xu and Y.~Xie, ``Conformal prediction for time series,'' \emph{{IEEE} Trans.
  Pattern Anal. Mach. Intell.}, vol.~45, no.~10, pp. 11\,575--11\,587, 2023.

\bibitem{wang2025mirror}
B.~Wang, M.~Zecchin, and O.~Simeone, ``Mirror online conformal prediction with
  intermittent feedback,'' \emph{{IEEE} Signal Process. Lett.}, vol.~32, pp.
  2888--2892, 2025.

\bibitem{Fishel2025TSP}
E.~Fishel, M.~Malka, S.~Ginzach, and N.~Shlezinger, ``Remote inference over
  dynamic links via adaptive rate deep task-oriented vector quantization,''
  \emph{{IEEE} Trans. Signal Process.}, vol.~73, pp. 3557--3571, 2025.

\bibitem{alimohammadi2024kpi}
H.~Alimohammadi, S.~Chatzimiltis, S.~Mayhoub, M.~Shojafar, S.~A. Soleymani,
  A.~Akbas, and C.~H. Foh, ``{KPI} poisoning: An attack in open {RAN} near
  real-time control loop,'' in \emph{Proc. IEEE Future Netw. World
  Forum}.\hskip 1em plus 0.5em minus 0.4em\relax IEEE, 2024, pp. 712--718.

\bibitem{einbinder2024label}
B.-S. Einbinder, S.~Feldman, S.~Bates, A.~N. Angelopoulos, A.~Gendler, and
  Y.~Romano, ``Label noise robustness of conformal prediction,'' \emph{J. Mach.
  Learn. Res.}, vol.~25, no. 328, pp. 1--66, 2024.

\bibitem{feldman2025conformal}
S.~Feldman, S.~Bates, and Y.~Romano, ``Conformal prediction with corrupted
  labels: Uncertain imputation and robust re-weighting,'' \emph{arXiv preprint
  arXiv:2505.04733}, 2025.

\bibitem{xi2025exploring}
H.~Xi, K.~Liu, H.~Zeng, W.~Sun, and H.~Wei, ``Exploring the noise robustness of
  online conformal prediction,'' in \emph{Proc. Adv. Neural Inf. Process.
  Syst.}, Dec. 2025.

\bibitem{Chi2026ICASSP}
Y.-Q. Chi, C.-C. Zong, T.~Jin, and S.-J. Huang, ``Learning from noisy labels: A
  conformal prediction perspective,'' in \emph{Proc. IEEE Int. Conf. Acoust.,
  Speech, Signal Process.}, 2026, pp. 3776--3780.

\bibitem{zinkevich2003online}
M.~Zinkevich, ``Online convex programming and generalized infinitesimal
  gradient ascent,'' in \emph{Proc. Int. Conf. Mach. Learn.}, 2003, pp.
  928--936.

\bibitem{orabona2019modern}
F.~Orabona, ``A modern introduction to online learning,'' \emph{arXiv preprint
  arXiv:1912.13213}, 2019.

\bibitem{flaxman2004online}
A.~D. Flaxman, A.~T. Kalai, and H.~B. McMahan, ``Online convex optimization in
  the bandit setting: gradient descent without a gradient,'' \emph{arXiv
  preprint cs/0408007}, 2004.

\bibitem{zhang2022parameter}
J.~Zhang and A.~Cutkosky, ``Parameter-free regret in high probability with
  heavy tails,'' in \emph{Proc. Adv. Neural Inf. Process. Syst.}, vol.~35, Nov.
  2022, pp. 8000--8012.

\bibitem{van2019user}
D.~van~der Hoeven, ``User-specified local differential privacy in unconstrained
  adaptive online learning,'' in \emph{Proc. Adv. Neural Inf. Process. Syst.},
  vol.~32, Dec. 2019.

\bibitem{Cao2019TAC}
X.~Cao and K.~J.~R. Liu, ``Online convex optimization with time-varying
  constraints and bandit feedback,'' \emph{{IEEE} Trans. Autom. Control},
  vol.~64, no.~7, pp. 2665--2680, 2019.

\bibitem{van2021robust}
T.~van Erven, S.~Sachs, W.~M. Koolen, and W.~Kotlowski, ``Robust online convex
  optimization in the presence of outliers,'' in \emph{Proc. Conf. Learn.
  Theory}.\hskip 1em plus 0.5em minus 0.4em\relax PMLR, 2021, pp. 4174--4194.

\bibitem{zhang2025unconstrained}
J.~Zhang and A.~Cutkosky, ``Unconstrained robust online convex optimization,''
  \emph{arXiv preprint arXiv:2506.12781}, 2025.

\bibitem{angelopoulos2025gradient}
A.~N. Angelopoulos, M.~I. Jordan, and R.~J. Tibshirani, ``Gradient equilibrium
  in online learning: Theory and applications,'' \emph{arXiv preprint
  arXiv:2501.08330}, 2025.

\bibitem{angelopoulos2024online}
A.~N. Angelopoulos, R.~F. Barber, and S.~Bates, ``Online conformal prediction
  with decaying step sizes,'' \emph{arXiv preprint arXiv:2402.01139}, 2024.

\bibitem{hu2026distributioninformed}
D.~Hu, J.~Wu, S.-T. Xia, and C.~Zou, ``Distribution-informed online conformal
  prediction,'' in \emph{Int. Conf. Learn. Represent.}, Apr. 2026.

\bibitem{cover1999elements}
T.~M. Cover, \emph{Elements of information theory}.\hskip 1em plus 0.5em minus
  0.4em\relax John Wiley \& Sons, 1999.

\bibitem{KT1981TIT}
R.~Krichevsky and V.~Trofimov, ``The performance of universal encoding,''
  \emph{{IEEE} Trans. Inf. Theory}, vol.~27, no.~2, pp. 199--207, 1981.

\bibitem{krizhevsky2009learning}
\BIBentryALTinterwordspacing
A.~Krizhevsky and G.~Hinton, ``Learning multiple layers of features from tiny
  images,'' University of Toronto, Toronto, ON, Canada, Tech. Rep., Apr. 2009.
  [Online]. Available:
  \url{https://www.cs.toronto.edu/~kriz/learning-features-2009-TR.pdf}
\BIBentrySTDinterwordspacing

\bibitem{he2016deep}
K.~He, X.~Zhang, S.~Ren, and J.~Sun, ``Deep residual learning for image
  recognition,'' in \emph{Proc. IEEE Conf. Comput. Vis. Pattern Recognit.},
  Jun. 2016, pp. 770--778.

\bibitem{murray2012ava}
N.~Murray, L.~Marchesotti, and F.~Perronnin, ``{AVA}: A large-scale database
  for aesthetic visual analysis,'' in \emph{Proc. IEEE Conf. Comput. Vis.
  Pattern Recognit.}, Jun. 2012, pp. 2408--2415.

\bibitem{levin2017markov}
D.~A. Levin and Y.~Peres, \emph{Markov chains and mixing times}.\hskip 1em plus
  0.5em minus 0.4em\relax American Mathematical Soc., 2017, vol. 107.

\end{thebibliography}
